\theoremstyle{plain}
\newtheorem{theorem}{Theorem}[section]
\newtheorem{lemma}[theorem]{Lemma}
\theoremstyle{definition}
\newtheorem{assumption}[theorem]{Assumption}
\theoremstyle{remark}
\newtheorem{remark}[theorem]{Remark}
\DeclareMathOperator*{\argmax}{arg\,max}
\DeclareMathOperator*{\argmin}{arg\,min}
\begin{document}

\twocolumn[
\theconferencetitle{Gradient Multi-Normalization for Stateless and Scalable LLM Training}
\theconferencesetsymbol{equal}{*}

\begin{theconferenceauthorlist}
\theconferenceauthor{Meyer Scetbon}{equal,msr}
\theconferenceauthor{Chao Ma}{equal,msr}
\theconferenceauthor{Wenbo Gong}{equal,msr}
\theconferenceauthor{Edward Meeds}{msr}

\end{theconferenceauthorlist}

\theconferenceaffiliation{msr}{Microsoft Research}

\theconferencecorrespondingauthor{Meyer Scetbon}{t-mscetbon@microsoft.com}
\theconferencecorrespondingauthor{Chao Ma}{chaoma@microsoft.com}
\theconferencecorrespondingauthor{Wenbo Gong}{wenbogong@microsoft.com}

\theconferencekeywords{Machine Learning, theconference}

\vskip 0.3in
]






\printAffiliationsAndNotice{\theconferenceEqualContribution}

\begin{abstract}
Training large language models (LLMs) typically relies on adaptive optimizers like Adam~\citep{adam}, which store additional state information to accelerate convergence but incur significant memory overhead. Recent efforts, such as SWAN~\citep{ma2024swansgdnormalizationwhitening}, address this by eliminating the need for optimizer states while achieving performance comparable to Adam via a multi-step preprocessing procedure applied to instantaneous gradients. Motivated by the success of SWAN, we introduce a novel framework for designing stateless optimizers that normalizes stochastic gradients according to multiple norms. To achieve this, we propose a simple alternating scheme to enforce the normalization of gradients w.r.t these norms. We show that our procedure can produce, up to an arbitrary precision, a fixed-point of the problem, and that SWAN is a particular instance of our approach with carefully chosen norms, providing a deeper understanding of its design. However, SWAN’s computationally expensive whitening/orthogonalization step limit its practicality for large LMs. Using our principled perspective,  we develop of a more efficient, scalable, and practical stateless optimizer. Our algorithm relaxes the properties of SWAN, significantly reducing its computational cost while retaining its memory efficiency, making it applicable to training large-scale models. Experiments on pre-training LLaMA models with up to 1 billion parameters demonstrate a $3$× speedup over Adam with significantly reduced memory requirements, outperforming other memory-efficient baselines.
\end{abstract}

\section{Introduction}

The training of Large Language Models (LLMs) relies heavily  on adaptive optimization algorithms, such as Adam~\cite{adam}, which dynamically adjust learning rates for each parameter based on past gradient information, leading to faster convergence and improved stability. However, these optimizers introduce substantial memory overhead due to the storage of internal states, typically moment estimates of gradients, a challenge that becomes particularly pronounced in distributed training settings where memory constraints and communication overhead are critical concerns~\cite{rajbhandari2020zero, korthikanti2023reducing, llama3}. In contrast, simpler first-order optimization methods such as Stochastic Gradient Descent (SGD) require significantly less memory but fail to adequately train LLMs~\cite{zhao2024deconstructing, zhang2020adaptive, kunstner2023noise, kunstner2024heavy}. As a result, there is an ongoing need for developing new optimization strategies that resolves the memory efficiency v.s. training performance dilemma for large-scale models training.

Recent research has made significant strides in improving the efficiency of optimization methods by reducing the memory overhead associated with saving optimizer states~\cite{hu2021lora,Lialin2023ReLoRAHT, Zhao2024GaLoreML, Hao2024FloraLA, xu2024adamlearningratescaling, jordan2024muon, zhang2024adam, ma2024swansgdnormalizationwhitening, zhu2024apollo}. Among these advancements,  ~\citet{ma2024swansgdnormalizationwhitening} introduce SWAN, a stateless optimizer that only performs pre-processing operations on the instantaneous gradients, achieving the same memory footprint as SGD while delivering comparable or even better performances than Adam. Collectively, these advances demonstrate that memory efficiency and loss throughput are not mutually exclusive, opening pathways for efficient optimization in large-scale deep learning.


\captionsetup[subfigure]{labelformat=empty}

\begin{figure*}[t]
    \centering
    \begin{minipage}{0.33\textwidth}
        \centering
        \includegraphics[width=\textwidth]{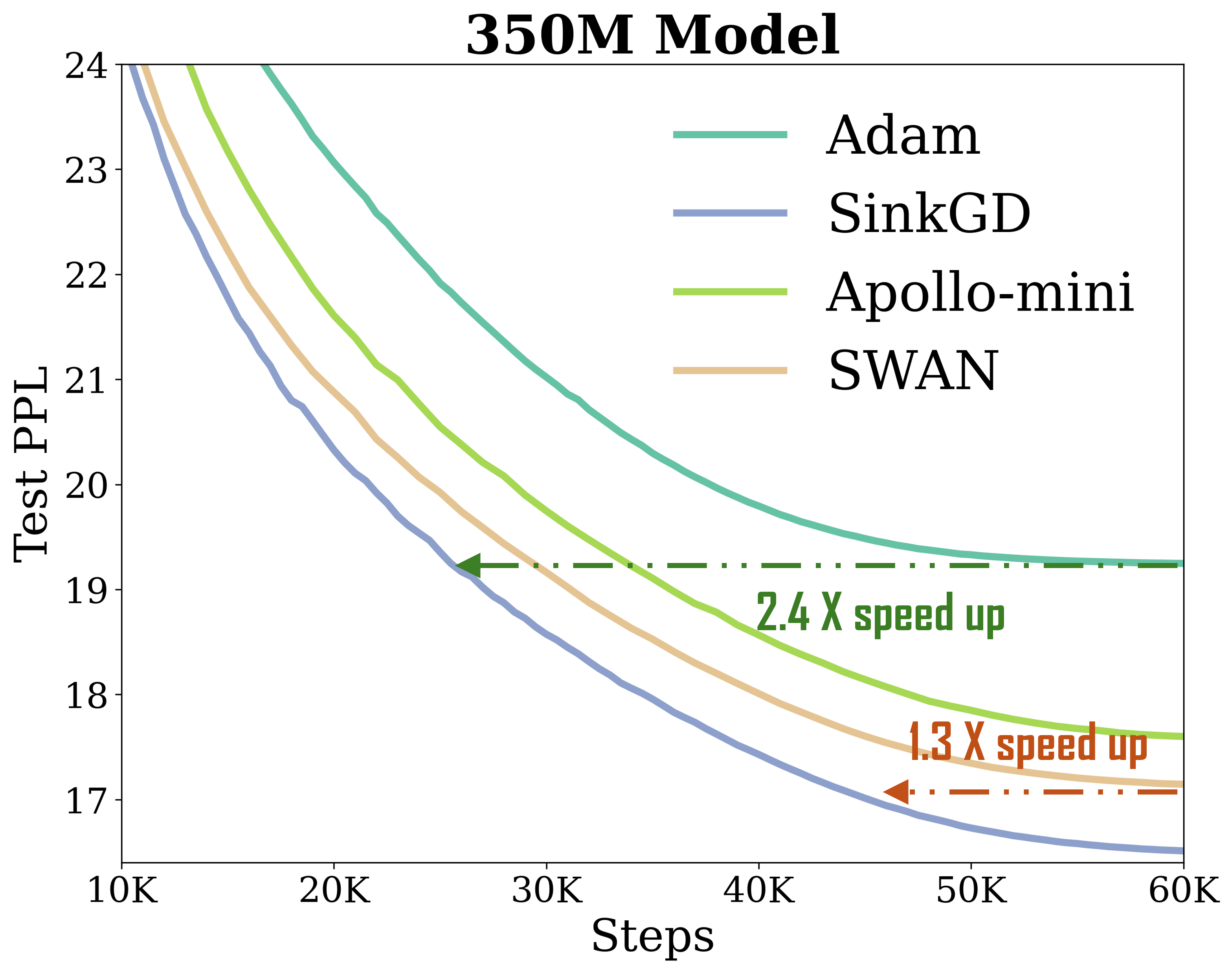}
        \caption*{(a) 350M LLaMA model}
    \end{minipage}\hfill
    \begin{minipage}{0.33\textwidth}
        \centering
        \includegraphics[width=\textwidth]{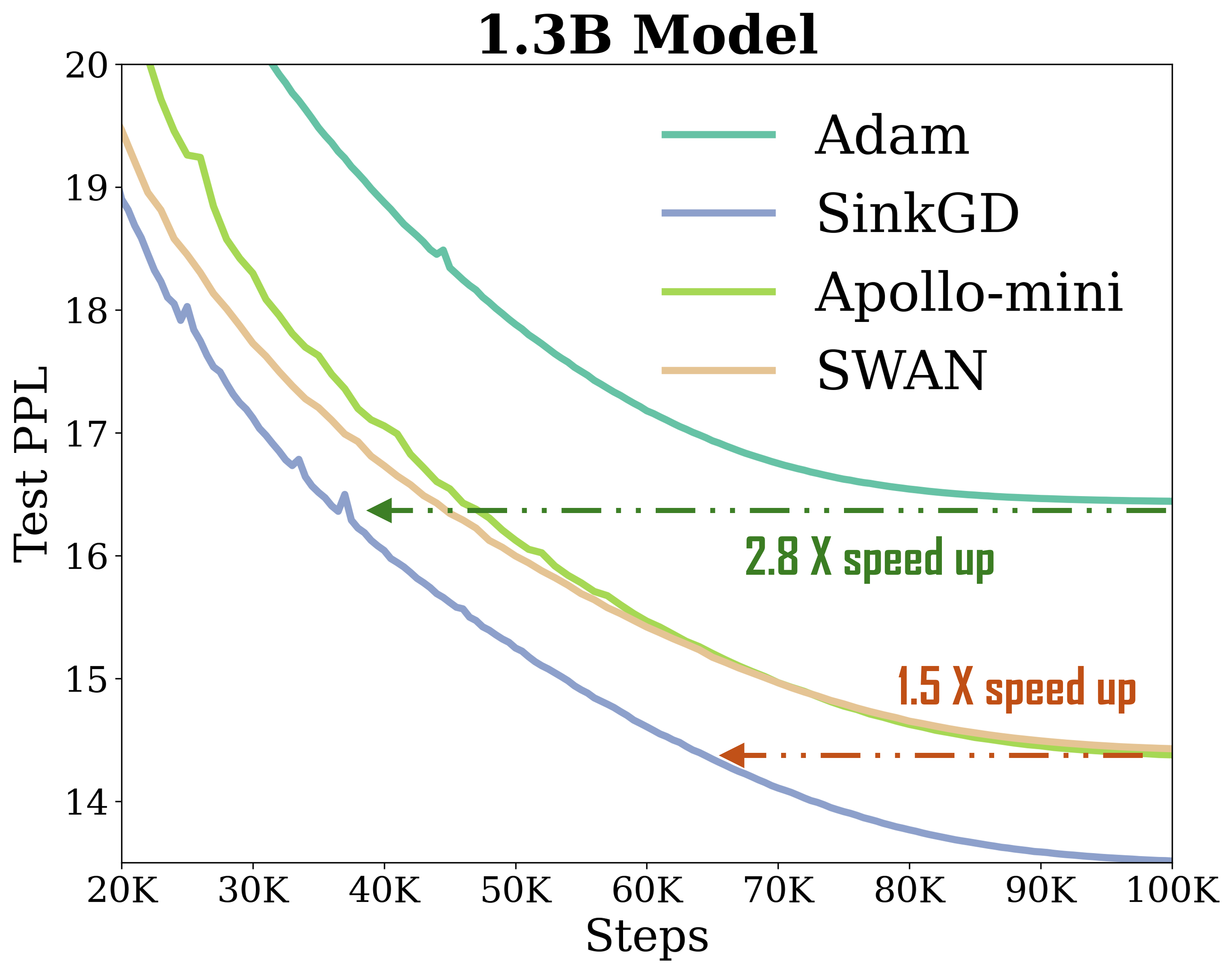}
        \caption*{(b) 1.3B LLaMA model}
    \end{minipage}\hfill
    \centering
    \begin{minipage}{0.33\textwidth}
        \centering
        \includegraphics[width=\textwidth]{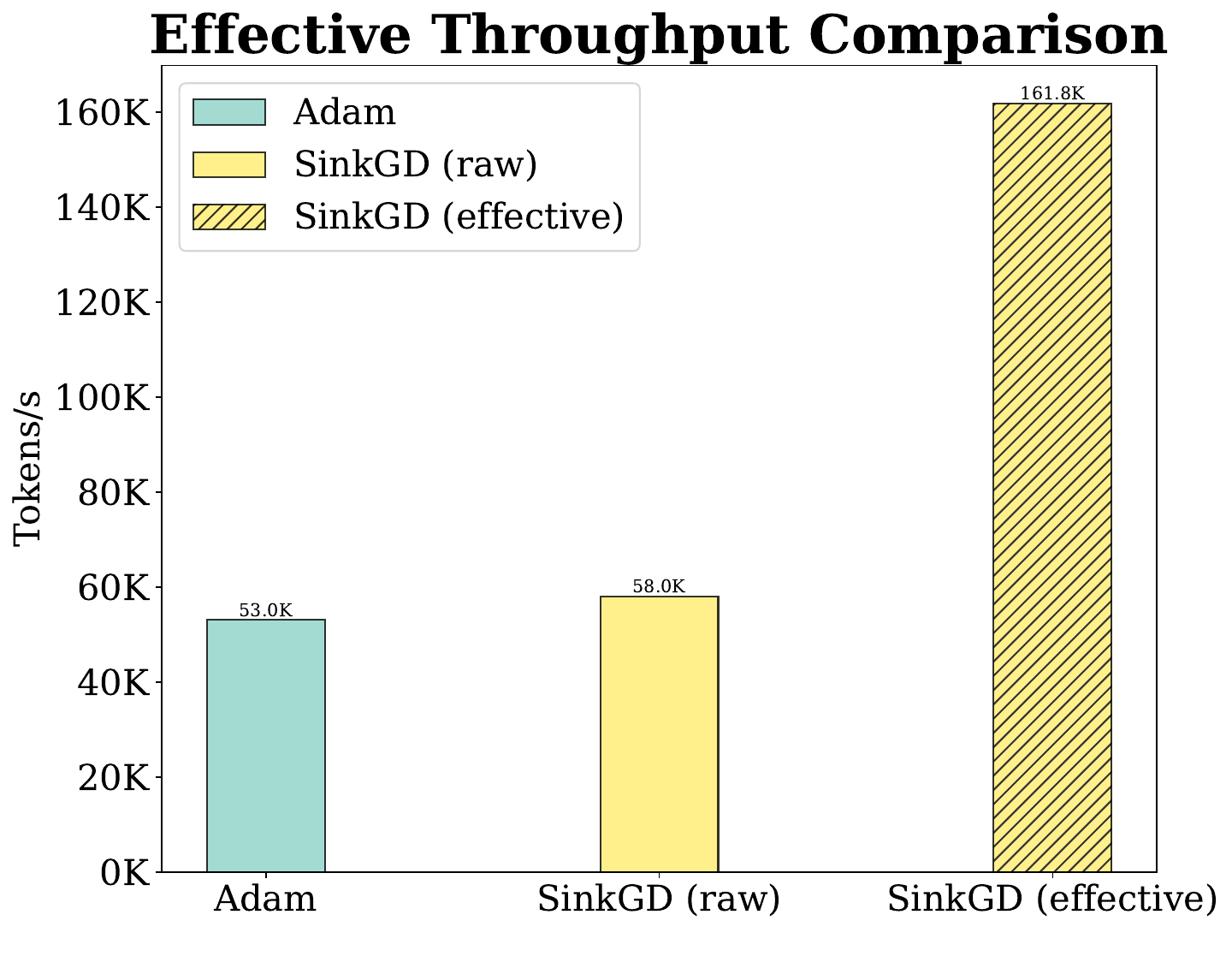}
        \caption*{(c) Training Throughputs}
    \end{minipage}\hfill
\caption{\textbf{\texttt{SinkGD} performance preview on LLM pretraining.}  \textbf{(a)} and \textbf{(b)}: Comparison of the test perplexities obtained by Adam~\cite{adam, Zhao2024GaLoreML}, SWAN \citep{ma2024swansgdnormalizationwhitening}, Apollo \citep{zhu2024apollo},  and our proposed \texttt{SinkGD} (Algorithm~\ref{alg:multi-normalized-sinkhorn}) on 1B LLaMA pretraining task with C4 dataset. All loss curves of Adam and Apollo-mini are reproduced from the corresponding opensource codes. We also compare with their official results in \Cref{tab:main}. On both 350M and 1.3B LLama architectures, \texttt{SinkGD}  achieves $>$ 2× speed-up vs Adam in terms of tokens seen; and 1.3 to 1.5 X speed-up vs SWAN and Apollo. \textbf{(c)}: Training throughput analysis on training 1.3 B model on 8 $\times$ A100, under constant batch size = 130K tokens. We present two metrics: raw throughput, measured by number of training tokens consumed per second; and effective throughput, which is raw throughput adjusted by the token efficiency of optimizer relative to Adam. \texttt{SinkGD} has a raw throughput that is marginally higher than Adam, while improving the effective throughput by $>$ 3×.
}
\label{fig: preview}
\end{figure*}

\paragraph{Contributions.} Motivated by the recent success of SWAN~\cite{ma2024swansgdnormalizationwhitening}, we introduce a framework for designing stateless optimizers based on a novel multi-normalization scheme. Unlike standard first-order methods that can be interpreted as gradient normalization according to a single norm ~\cite{bernstein2024old}, our approach aims at normalizing gradients according to multiple norms. We demonstrate that SWAN is a specific instance of our general framework. However, a key limitation of SWAN is its computational overhead: it relies on whitening/orthogonalization operation which has complexity $\mathcal{O}(m^2(m+n))$. This may hinder its scalability to large-scale training. To overcome this, we propose a new stateless optimizer that achieves Adam-level computational cost ($\mathcal{O}(mn)$), while having the same memory footprint as SGD. Moreover, it achieves on par or even outperforms Adam in LLM pretraining tasks, as well as various existing memory-efficient baselines under LLaMA architecture. Our contributions are summarized below:
\begin{itemize}[leftmargin=1em]
    \item \textbf{Multi-Normalized Gradient Descent.} 
    \begin{enumerate}[leftmargin=1em]
         \item We propose a novel family of first-order  methods, called \emph{Multi-Normalized Gradient Descent} (MNGD), that aims at normalizing gradients according to multiple norms. Our framework generalizes the steepest descent viewpoint of~\cite{bernstein2024old} that recasts popular first-order optimizers as normalization of gradients under a single norm.
    
        \item We then propose a simple alternating scheme in order to effectively compute the multi-normalization of gradients, and show that our algorithm can provide a fixed-point solution up to an arbitrary precision, ensuring the normalization of its output with respect to the norms considered. 
        
        \item We demonstrate that SWAN~\cite{ma2024swansgdnormalizationwhitening} is a particular instance of MNGD where the gradient is normalized according to two well-chosen norms: (1) the row-wise $\ell_2$-norm, and (2) the spectral norm. 
    \end{enumerate}

    \item \textbf{\texttt{SinkGD}: an efficient, scalable, and stateless optimizer.}

    \begin{enumerate}[leftmargin=1em]
         \item We leverage our framework and design a new stateless optimizer that relaxes the constraint of SWAN to improve computational efficiency. Our algorithm, namely \texttt{SinkGD}~(Algorithm~\ref{alg:multi-normalized-sinkhorn}), alternatively performs row-wise and column-wise normalization according to the Euclidean geometry. We show that \texttt{SinkGD} exactly recovers the square-root iterates of the Sinkhorn algorithm~\cite{sinkhorn1964relationship}.

    \item Finally, we evaluate our Sinkhorn-based stateless optimizer \texttt{SinkGD} by training LlaMA models on various scales, from 60m to 1.3B. Results (\Cref{fig: preview}) show that \texttt{SinkGD} manages to be on par or even outperforms the Adam optimizer, as well as other memory-efficient baselines, achieving a $3$× speedup over Adam at 1B scale, with significantly reduced memory requirements.
    \end{enumerate}

\end{itemize}

\subsection{Related Work}

\paragraph{Gradient Normalization.} Gradient normalization has emerged as a key technique in optimization, complementing its well-established role in forward-pass operations such as Layer Normalization (LayerNorm) \citep{Ba2016LayerN}. LARS and LAMB \citep{you2017lars, you2019lamb} employ global normalization to raw gradients and Adam's layer-wise updates, respectively, improving convergence and mitigating gradient pathologies in large-batch training. Apollo \citep{zhu2024apollo} introduces a channel-wise scaling approach, while SWAN \citep{ma2024swansgdnormalizationwhitening} replaces Adam's first-moment estimate with normalized gradients to stabilize gradient distributions. Theoretical analyses further underscore the importance of gradient normalization. \citet{hazan2015beyond} study its convergence properties in SGD, while \citet{cutkosky2020momentum} demonstrate that incorporating momentum enhances convergence without requiring large batches. \citet{bernstein2024old} interpret normalization in certain optimizers as a form of steepest descent under a specific norm, with SignSGD \citep{bernstein2018signsgd}, or standard gradient descent, serving as examples of gradient normalization.

\paragraph{Memory Efficient Optimizers.} Optimizers for large-scale training can reduce memory consumption primarily through two approaches: (1) low-rank approximation and (2) elimination of internal state dependencies. Low-rank optimizers project gradients onto a reduced subspace, allowing internal state updates within this subspace. ReLoRA \citep{Lialin2023ReLoRAHT} periodically merges LoRA \citep{hu2021lora} weights to restore full-rank representations. FLoRA \citep{Hao2024FloraLA} employs random Gaussian projections, whereas GaLore \citep{Zhao2024GaLoreML} utilizes singular value decomposition (SVD) for structured projections, further improved by Fira \citep{chen2024fira} via a compensation term. Apollo \citep{zhu2024apollo} minimizes memory overhead using rank-1 state representations. An alternative approach eliminates the need for internal states altogether. SWAN \citep{ma2024swansgdnormalizationwhitening} removes Adam's first and second moments through gradient normalization and whitening. Adam-mini \citep{zhang2024adam} reduces memory by leveraging block-wise second moment estimation. SGD-SaI \citep{xu2024no} obviates Adam’s second moment by precomputing learning rate scaling. Sign-based optimization \citep{chen2024symbolic} enables large-scale training using only first-moment updates. Muon \citep{jordan2024muon}, a simplification of Shampoo \citep{gupta2018shampoopreconditionedstochastictensor}, accelerates large model training via whitened first-moment updates, further demonstrating the viability of reduced-memory optimizers.

\paragraph{Alternating Projection.} Many iterative fixed-point algorithms employ alternating updates to enforce constraints or refine estimates. A classical example is the Von Neumann algorithm~\cite{von1950functional}, which alternates projections onto affine subspaces and converges to their intersection. The Sinkhorn algorithm~\cite{sinkhorn1967concerning} similarly alternates row and column normalizations, which can be seen as Bregman projections~\cite{benamou2015iterative} onto affine spaces, to approximate entropy-regularized optimal transport. While effective in Hilbert spaces, these algorithms do not generalize to arbitrary convex sets. Dykstra’s algorithm~\cite{dykstra1983algorithm} extends these methods by introducing correction terms, ensuring convergence to the exact projection. More generally, alternating projection methods have been extended through Pierra’s product space reformulation~\cite{pierra1984decomposition}, as well as modern techniques like ADMM~\cite{boyd2011distributed} and block-coordinate methods~\cite{tibshirani2017dykstra} in large-scale optimization. Despite these theoretical advances, extending alternating projection methods to non-convex settings remains a significant challenge. Recent progress includes manifold-based projection methods~\cite{ lewis2008alternating}, and proximal alternating techniques~\cite{bolte2014proximal}, which aim to improve convergence in non-convex problems, yet a comprehensive theory for convergence remains an open question.

\section{Background}

\subsection{From Adam to Stateless Optimizers}

\paragraph{Adam Optimizer.} Adam~\citep{adam} relies on accumulating internal states throughout training in order to improve the convergence. More formally, given a loss function $(\theta,x)\in\Theta\times\mathcal{X}\to \mathcal{L}(\theta,x)\in\mathbb{R}$, where $\Theta\subset\mathbb{R}^d$ is the set of learnable parameters and $\mathcal{X}$ is the set where the data resides, Adam aims at minimizing $\theta\to\mathbb{E}_{x\sim \mathbb{P}_x}(\mathcal{L}(\theta,x))$ where $\mathbb{P}_x$ is the distribution of data on $\mathcal{X}$. To achieve this, Adam computes at every step $t\geq 1$ a stochastic gradient associated with a mini-batch of input data $x^{(t)}$, and performs the following updates:
\begin{align*}
    \nabla_t &= \nabla_\theta \mathcal{L}(\theta_t,  x^{(t)})\\
    \text{m}_t & = \beta_1 \text{m}_{t-1} + (1-\beta_1) \nabla_t,\quad \hat{\text{m}}_t=\frac{\text{m}_t}{1-\beta_1^{t}} \\ 
    \text{s}_t & =  \beta_2 \text{s}_{t-1} + (1-\beta_2) \nabla_t^{\odot 2},\quad \hat{\text{s}}_t=\frac{\text{s}_t}{1-\beta_2^{t}}\\ 
    \theta_{t+1} &= \theta_t - \eta_t \frac{\hat{\text{m}}_t}{\sqrt{\hat{\text{s}}_t} +\varepsilon} 
\end{align*}
where $\odot$ is the Hadamard product, $\eta_t>0$ are global step-sizes, and $\beta_1, \beta_2>0$ are the weights of the exponential moving averages (EMAs) for the first and second moments respectively. During training, Adam optimizer stores two additional states $(\text{m}_t,\text{s}_t)$, effectively tripling the memory required to train the model compared to a simple stochastic gradient descent (SGD) scheme.

\paragraph{SWAN: a Stateless Optimizer.} Recently,~\citet{ma2024swansgdnormalizationwhitening} propose to move away from the paradigm of keeping track of internal states during the training of LLMs, and propose SWAN, a stateless optimizer that only pre-processes the stochastic gradients before updating the parameters. More precisely, they propose
to update the learnable weight matrices involved in the model using two matrix operators. Given a weight matrix $W\in\mathbb{R}^{m\times n}$, with $m\leq n$, at time $t\geq 1$, the SWAN update is:
\begin{equation}
\begin{aligned}
\label{eq:swan-update}
    \nabla_t &= \nabla_W \mathcal{L}(W_t,  x^{(t)})\\
    \tilde{\nabla}_t &=\sqrt{n} Q(\nabla_t)^{-1}\nabla_t\\
    \hat{\nabla}_t &=  \sqrt{n}(\tilde{\nabla}_t\tilde{\nabla}_t^\top)^{-1/2}\tilde{\nabla}_t\\
    W_{t+1}&= W_t - \eta_t  \hat{\nabla}_t\; , 
\end{aligned}
\end{equation}

where for a matrix $W\in\mathbb{R}^{m\times n}$, $Q(W):=\text{Diag}(\Vert W_{1,:}\Vert_2, \dots, \Vert W_{m,:}\Vert_2)$ is the diagonal matrix of size $m$ where the diagonal coefficients are the $\ell_2$-norm of the rows of $W$. To compute $(\hat{\nabla}_t\hat{\nabla}_t^\top)^{-1/2}$, the authors leverage the Newton-Schulz algorithm~\citep{song2022fast, li2018towards, huang2019iterative} instead of computing the SVD. While this approach does not require storing any additional states, it still suffers from a computational burden due to the $\mathcal{O}( m^2(n+m))$ computation of $(\nabla_t\nabla_t^\top)^{-1/2}\nabla_t$ which may limit usage for training large models.

\subsection{Steepest Descent as Gradient Normalization}
\label{sec:single-norm}
\citet{bernstein2024old} interpret several gradient descent schemes as steepest descent methods under specific norms. More formally, they propose to minimize a local quadratic model of the loss $\mathcal{L}(\cdot,x^{(t)})$ at $\theta_t$ w.r.t to a given norm $\Vert \cdot \Vert$, that is:
\begin{align*}
   \mathcal{Q}_{\Vert\cdot\Vert}(z):=\mathcal{L}(\theta_t, x^{(t)}) +\langle \nabla_t, z\rangle + \frac{\lambda_t}{2} \Vert z \Vert^2 
\end{align*}
where $\lambda_t>0$ are the sharpness parameters and $\nabla_t:=\nabla_{\theta} \mathcal{L}(\theta_t, x^{(t)})$ is the current stochastic gradient.

As shown in~\citep{bernstein2024old}, finding a minimizer of $\mathcal{Q}_{\Vert\cdot\Vert}$ can be equivalently formulated as solving:
\begin{align}
\label{eq-steepest}
  -\frac{\Vert \nabla_t\Vert_{*}}{\lambda_t} \argmax_{z\in\mathbb{R}^d:~\Vert z\Vert= 1} \langle \nabla_t, z\rangle 
\end{align}
where $\Vert x \Vert_{*}:=\sup_{z\in\mathbb{R}^d:~\Vert z\Vert= 1} \langle x, z\rangle$ is the dual norm of $\Vert x\Vert$. Their framework encompasses a large family of optimizers that perform the following update: 
\begin{equation}
\begin{aligned}
\label{eq:single-norm}
    \nabla_t &= \nabla_{\theta} \mathcal{L}(\theta_t, x^{(t)}) \\
\hat{\nabla}_t &=\argmax_{z\in\mathbb{R}^d:~\Vert z\Vert= 1} \langle \nabla_t, z\rangle\\
\theta_{t+1} &= \theta_t -\frac{\Vert \nabla_t\Vert_{*}}{\lambda_t}\hat{\nabla}_t
\end{aligned}
\end{equation}

Several popular gradient-descent schemes can be recovered using the above approach. For example, when the $\ell_2$-norm is used, one recovers standard gradient descent, while the $\ell_{\infty}$ leads to signed gradient descent~\citep{carlson2015stochastic}. However, this framework considers only  a single norm for pre-processing the raw gradient $\nabla_t$. In the following, we extend this approach to incorporate multiple norms for gradient pre-processing, enabling the design of efficient and stateless optimizers for LLM training.

\section{Multi-Normalized Gradient Descent} \label{sec: mngd}
Before presenting our approach, let us first introduce some clarifying notations.

\paragraph{Notations.} For a vector $x\in\mathbb{R}^d$, we call its normalized projection w.r.t to a given norm $\Vert \cdot\Vert$, the solution to the following optimization problem:
\begin{align}
\label{eq-single-proj}
\mathcal{P}_{\Vert \cdot \Vert}(x):=\argmax_{z:~\Vert z \Vert = 1} \langle x, z\rangle    
\end{align}
We also extend the definition of this notation if $x\in\mathbb{R}^{m\times n}$ is a matrix and $\Vert \cdot \Vert$ is a matrix norm.

\subsection{Gradient Multi-Normalization}

Let us now consider a finite family of $K\geq 1$ norms $(g_1,\dots,g_K)$. In order to pre-process the gradient $\nabla$ jointly according to these norms, we propose to consider the following optimization problem:
\begin{align}
\label{eq:multi-norm-opt}
 \argmax_{z} \langle \nabla, z\rangle~ \text{s.t.}~\forall~i\in [|1,K|],~g_i(z)=1\; .
\end{align}
Assuming the constraint set is non-empty, the existence of a maximum is guaranteed. However, this problem is NP-hard and non-convex due to the constraints, making it hard to solve efficiently for the general case of arbitrary norms.

\begin{algorithm}[!t]
   \caption{$\texttt{MultiNorm}(\nabla,L, \bm{g})$}
   \label{alg:alt-proj}
\begin{algorithmic}
   \STATE {\bfseries Input:} the stochastic gradient $\nabla_\theta\mathcal{L}(\theta_t,x^{(t)})$, the norms $\bm{g}:=(g_1,\dots,g_K)$, and $L\geq 1$ the number of iterations.
   \STATE Initialize $x=\nabla_\theta\mathcal{L}(\theta_t,x^{(t)})$.
   \FOR{$\ell=1$  {\bfseries to} $L$}
   \FOR{$i=1$ {\bfseries to} $K$}
   \STATE $x\gets \mathcal{P}_{g_i}(x):=\argmax\limits_{z:~g_i(z) = 1} \langle x, z\rangle $
   \ENDFOR
    \ENDFOR
    \STATE Return $x$
\end{algorithmic}
\end{algorithm}

\begin{remark}
Observe that when $K=1$, the problem~\eqref{eq:multi-norm-opt} recovers exactly the single normalization step used in~\cite{bernstein2024old}, as presented in~\eqref{eq:single-norm}.
\end{remark}

\begin{remark}
The convex relaxation of~\eqref{eq:multi-norm-opt}, defined as  
\begin{align}
\label{eq:multi-norm-opt-convex}
 \argmax_{z} \langle \nabla, z\rangle\quad \text{s.t.}~\forall~~i\in [|1,K|],~~g_i(z)\leq 1
\end{align}
is in fact equivalent to the single normalization case discussed in Section~\ref{sec:single-norm}, where the norm considered is $\Vert x\Vert:=\max\limits_{i\in[|1,K]} g_i(x)$. Thus, solving~\eqref{eq:multi-norm-opt-convex} is equivalent to computing the projection $\mathcal{P}_{\Vert \cdot \Vert}(\nabla)$. In Appendix~\ref{sec:convex-relaxation}, we provide a general approach to compute it using the so-called Chambolle-Pock algorithm~\cite{chambolle2011first}.
\end{remark}

While solving~\eqref{eq:multi-norm-opt} exactly might not be practically feasible in general, we propose a simple alternating projection scheme, 
presented in Algorithm~\ref{alg:alt-proj}. Notably, our method assumes that the projections $\mathcal{P}_{g_i}(\cdot)$  can be efficiently computed for all $i\in[|1,K|]$. Fortunately, when the $g_i$'s correspond to $\ell_p$-norms with $p\in[|1,+\infty|]$, or Schatten $p$-norms for matrices, closed-form solutions for these projections exist. See Appendix~\ref{sec:convex-relaxation} for more details.

\paragraph{SWAN: an Instance of $\texttt{MultiNorm}$.}  SWAN~\cite{ma2024swansgdnormalizationwhitening} applies two specific pre-processing steps to the raw gradients in order to update the weight matrices. In fact, each of these pre-processing steps can be seen as normalized projections with respect to a specific norm. More precisely, for $W\in\mathbb{R}^{m\times n}$ and $m\leq n$, let us define
\begin{align*}
g_1(W):=\frac{\max\limits_{i\in[|1,m|]} \Vert W_{i,:}\Vert_2}{\sqrt{n}}\; ,~ \text{and}~~ 
g_2(W):=\frac{\Vert W\Vert_{\sigma, \infty}}{\sqrt{n}}\; .
\end{align*}
where for $p\in [1,+\infty]$, $\Vert W\Vert_{\sigma,p}$ is the Schatten $p$-norm of $W$. Simple derivations leads to the following equalities:
\begin{align*}
    \mathcal{P}_{g_1}(W)&= \sqrt{n} Q(W)^{-1}W\\
    \mathcal{P}_{g_2}(W)&=\sqrt{n}(WW^\top)^{-1/2}W
\end{align*}
Therefore applying a single iteration ($L=1$) of Algorithm~\ref{alg:alt-proj} with norms $g_1$ and $g_2$ as defined above on the raw gradient $\nabla_t$ exactly leads to the SWAN update (Eq.~\eqref{eq:swan-update}).

\subsection{On the Convergence of \texttt{MultiNorm}}
We aim now at providing some theoretical guarantees on the convergence of  $\texttt{MultiNorm}$ (Algorithm~\ref{alg:alt-proj}). More precisely, following the SWAN implementation~\cite{ma2024swansgdnormalizationwhitening}, we focus on the specific case where $K=2$ and the normalized projections associated with the norms $g_1$ and $g_2$ have constant $\ell_2$-norm. More formally, we consider the following assumption.
\begin{assumption}
\label{assump-norm}
Let $g$ be a norm on $\mathbb{R}^d$. We say that it satisfies the assumption if for all $x\in\mathbb{R}^d$, $\Vert \mathcal{P}_{g}(x) \Vert_2 = c $ where $c>0$ is an arbitrary positive constant independent of $x$ and $\Vert\cdot\Vert_2$ represents the Euclidean norm.
\end{assumption}

\begin{remark}
Observe that both norms in SWAN satisfies Assumption~\ref{assump-norm} and their normalized projections have the same $\ell_2$-norm, as for any $W\in\mathbb{R}^{m\times n}$ with $m\leq n$, we have $\Vert \mathcal{P}_{g_1}(W) \Vert_2 = \Vert \mathcal{P}_{g_2}(W)\Vert_2 = \sqrt{nm}$.
\end{remark}

This assumption enables to obtain useful properties on $\mathcal{P}_{g}$ as we show in the following Lemma:
\begin{lemma}
\label{lem:properties-proj}
Let $g$ a norm satisfying Assumption~\ref{assump-norm}. Then
\begin{align*}
    \mathcal{P}_{g}\circ\mathcal{P}_{g} =\mathcal{P}_{g}
\end{align*}
and for all $x\in\mathbb{R}^d$, $g^*(\mathcal{P}_g(x))=\Vert \mathcal{P}_g(x)\Vert_2^2=c^2$, 
where $g^*$ is the dual norm associated with $g$.
\end{lemma}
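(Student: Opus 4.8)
The plan is to reduce both claims to one short chain of inequalities built from the definition of the normalized projection, the definition of the dual norm, and Cauchy--Schwarz. The last equality $\Vert \mathcal{P}_g(x)\Vert_2^2 = c^2$ is immediate from Assumption~\ref{assump-norm} (square both sides of $\Vert\mathcal{P}_g(x)\Vert_2 = c$), so the work is to prove $g^*(\mathcal{P}_g(x)) = c^2$ and $\mathcal{P}_g\circ\mathcal{P}_g = \mathcal{P}_g$, which I will obtain together. Throughout, fix $x\neq 0$ and set $y:=\mathcal{P}_g(x)$ and $y':=\mathcal{P}_g(y)$.

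First I would record the two elementary facts that make the argument go through. By definition of $\mathcal{P}_g$ we have $g(y)=1$, and since the dual norm satisfies $g^*(u)=\sup_{g(z)=1}\langle u,z\rangle = \langle u,\mathcal{P}_g(u)\rangle$, applying this with $u=y$ gives $g^*(y)=\langle y,y'\rangle$. The key observation is then to use $y$ \emph{itself} as a feasible competitor in the problem defining $g^*(y)$: because $g(y)=1$, we get $g^*(y)\ge \langle y,y\rangle = \Vert y\Vert_2^2 = c^2$, where the last step is Assumption~\ref{assump-norm} applied to $y=\mathcal{P}_g(x)$. For the reverse bound, note that $y=\mathcal{P}_g(x)$ and $y'=\mathcal{P}_g(y)$ both lie in the image of $\mathcal{P}_g$, so Assumption~\ref{assump-norm} gives $\Vert y\Vert_2 = \Vert y'\Vert_2 = c$, and Cauchy--Schwarz yields $g^*(y)=\langle y,y'\rangle \le \Vert y\Vert_2\Vert y'\Vert_2 = c^2$. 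Combining, $c^2 \le g^*(y) = \langle y,y'\rangle \le c^2$, so all three are equal; this proves $g^*(\mathcal{P}_g(x))=c^2$. Finally, equality in Cauchy--Schwarz together with $\Vert y\Vert_2 = \Vert y'\Vert_2 = c > 0$ forces $y'=y$, i.e. $\mathcal{P}_g(\mathcal{P}_g(x)) = \mathcal{P}_g(x)$; the same reasoning applied to any elements of the argmax sets shows $\mathcal{P}_g(y)=\{y\}$ for every $y$ in the image, so the identity holds verbatim even if $\mathcal{P}_g$ is a priori set-valued.

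I do not expect a serious obstacle here — the proof is essentially two lines once set up correctly. The only real pitfall is the direction of the argument: one should resist reaching for heavier convex-analytic machinery (subdifferentials of $g^*$, normal cones to the $g$-unit ball, Legendre duality) and instead notice that testing the linear functional $\langle y,\cdot\rangle$ against the vector $y$ itself, combined with Cauchy--Schwarz, pins $g^*(y)$ down from both sides simultaneously. I would also state explicitly at the outset that $x\neq 0$, so that $\mathcal{P}_g(x)$ is well defined and Assumption~\ref{assump-norm} is non-vacuous.
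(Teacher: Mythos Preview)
Your proof is correct and follows essentially the same route as the paper: both arguments sandwich $g^*(\mathcal{P}_g(x))=\langle y,y'\rangle$ between $\Vert y\Vert_2^2$ (from feasibility of $y$ itself in the dual-norm supremum) and $\Vert y\Vert_2\Vert y'\Vert_2$ (Cauchy--Schwarz), then use Assumption~\ref{assump-norm} to collapse both bounds to $c^2$ and invoke the equality case to get idempotence. Your write-up is slightly cleaner in separating the two bounds and in noting the set-valued subtlety, but the underlying argument is identical.
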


Let us now introduce some additional notation to clearly state our result. Let $x_0\in\mathbb{R}^d$ and let us define for $n\geq 0$:
\begin{equation}
\begin{aligned}
\label{eq:seq}
    x_{2n+1}&:=\mathcal{P}_{g_1}(x_{2n})\\
    x_{2n+2}&:= \mathcal{P}_{g_2}(x_{2n+1})
    \end{aligned}
\end{equation}

which is exactly the sequence generated by Algorithm~\ref{alg:alt-proj} when $K=2$ and $x_0=\nabla_\theta\mathcal{L}(\theta_t,x^{(t)})$. Let us now show our main theoretical result, presented in the following Theorem.
\begin{theorem}
\label{thm:cvg}
Let $g_1$ and $g_2$ two norms on $\mathbb{R}^d$ satisfying Assumption~\ref{assump-norm} and such that their normalized projections have the same $\ell_2$ norm. Let also $(x_n)_{n_\geq 0}$ be defined as in~\eqref{eq:seq} and let us define the set of fixed-point as:
\begin{align*}
    \mathcal{F}:=\{x:~\mathcal{P}_{g_1}(x)=\mathcal{P}_{g_2}(x)=x\}
\end{align*}
Then by denoting $d(x,\mathcal{F}):=\min\limits_{z\in\mathcal{F}}\Vert x - z\Vert_2$ we have 
\begin{align*}
d(x_n,\mathcal{F}) \xrightarrow[n\to\infty]{} 0\; .
\end{align*}
\end{theorem}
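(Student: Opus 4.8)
The plan is to recast the iteration~\eqref{eq:seq} as an alternating \emph{Euclidean} projection onto two compact sets and then combine a monotone Lyapunov functional with a compactness argument. First, I would extract some structure from Assumption~\ref{assump-norm} and Lemma~\ref{lem:properties-proj}. Write $S_c:=\{z:\Vert z\Vert_2=c\}$ and, for a norm $g$ satisfying the assumption, $R_g:=\{z:\mathcal{P}_g(z)=z\}$ (which, by idempotency, is also the range of $\mathcal{P}_g$). I would show $R_g=\{z:g(z)=1\}\cap S_c$: ``$\subseteq$'' is immediate, and for ``$\supseteq$'' note that if $g(z)=1$ and $\Vert z\Vert_2=c$ then $g^*(z)=\langle z,\mathcal{P}_g(z)\rangle\le\Vert z\Vert_2\Vert\mathcal{P}_g(z)\Vert_2=c^2=\langle z,z\rangle$, so $z$ (being $g$-feasible) already attains the supremum defining $g^*(z)$, hence $\mathcal{P}_g(z)=z$. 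Thus $R_g$ is compact and $\mathcal{F}=R_{g_1}\cap R_{g_2}$ is closed. Moreover $\mathcal{P}_g$ is the Euclidean projection onto $R_g$: $\mathcal{P}_g(x)\in R_g\subseteq\{g=1\}$ maximizes $\langle x,\cdot\rangle$ over the larger set $\{g=1\}$, hence over $R_g$, and on $S_c$ maximizing $\langle x,\cdot\rangle$ is the same as minimizing $\Vert x-\cdot\Vert_2$. So~\eqref{eq:seq} alternates Euclidean projections onto $R_{g_1}$ and $R_{g_2}$, the odd iterates lie in $R_{g_1}$, the even ones ($\ge2$) in $R_{g_2}$, and all $x_n$ with $n\ge1$ live on the compact sphere $S_c$.

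Next, I would introduce the Lyapunov functional $V_n:=\langle x_n,x_{n+1}\rangle$ and show it is non-decreasing for $n\ge1$: since $x_n$ and $x_{n+2}$ are projected onto the same set $R_{g_i}$, the iterate $x_n$ is feasible for the problem defining $x_{n+2}$, and as $x_{n+2}$ maximizes $\langle x_{n+1},\cdot\rangle$ over that set, $V_{n+1}=\langle x_{n+1},x_{n+2}\rangle\ge\langle x_{n+1},x_n\rangle=V_n$. Since $V_n\le c^2$ by Cauchy--Schwarz, $V_n\nearrow V_\infty\le c^2$, i.e.\ $\Vert x_{n+1}-x_n\Vert_2^2=2c^2-2V_n$ decreases to some $\rho^2\ge0$; one also gets $\max\{d(x_n,R_{g_1}),d(x_n,R_{g_2})\}=\Vert x_{n+1}-x_n\Vert_2\to\rho$, because one of the two distances is $0$ and the other equals $\Vert x_{n+1}-x_n\Vert_2$. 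This reduces the theorem to the single claim $\rho=0$: if $\rho=0$ then both $d(x_n,R_{g_1})\to0$ and $d(x_n,R_{g_2})\to0$, so every subsequential limit $x^*$ of $(x_n)$ (which exists by compactness of $S_c$) lies in $R_{g_1}\cap R_{g_2}=\mathcal{F}$ by closedness, and the standard ``all subsequential limits in $\mathcal{F}$'' argument yields $d(x_n,\mathcal{F})\to0$ (and en route shows $\mathcal{F}\neq\emptyset$).

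It therefore remains to prove $\rho=0$, and this is the step I expect to be the main obstacle. I would argue by contradiction: refining $(x_n)$ to subsequences with $x_{2n_k}\to x^*\in R_{g_2}$ and $x_{2n_k+1}\to y^*\in R_{g_1}$, continuity of $\mathcal{P}_{g_1},\mathcal{P}_{g_2}$ gives $y^*=\mathcal{P}_{g_1}(x^*)$ and $x_{2n_k+2}\to\mathcal{P}_{g_2}(y^*)$; pushing $V_{2n_k},V_{2n_k+1}\to V_\infty$ through the maximizer inequalities forces $\mathcal{P}_{g_2}(y^*)=x^*$, so $(x^*,y^*)$ is a genuine $2$-cycle of $(\mathcal{P}_{g_1},\mathcal{P}_{g_2})$ with $\langle x^*,y^*\rangle=V_\infty$ and $\Vert x^*-y^*\Vert_2=\rho$. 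One then has to exclude such a nondegenerate cycle, equivalently to show $\langle x^*,y^*\rangle=c^2$.

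In the convex alternating-projection (von Neumann) setting this last point is exactly where firm nonexpansiveness / convexity of the target sets is used; here $R_{g_1},R_{g_2}$ are nonconvex subsets of a sphere, so one cannot invoke the classical argument and must instead exploit additional structure — the explicit form of the sets $R_{g_i}$ beyond Assumption~\ref{assump-norm} (e.g.\ the product-of-spheres and Stiefel-type manifolds arising for SWAN's row-wise and spectral norms), and/or nonemptiness of $\mathcal{F}$ together with a sharper (Fej\'er- or linear-regularity-type) estimate — to rule out a limit cycle with $\rho>0$. I expect essentially all of the technical difficulty to be concentrated in this last point, the preceding steps being routine bookkeeping.
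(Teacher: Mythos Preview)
Your argument tracks the paper's almost exactly through the reduction: monotone inner products $\langle x_n,x_{n+1}\rangle\nearrow r$, extraction of subsequential limits $(y^*,x^*)$ of the odd/even iterates satisfying the $2$-cycle $y^*=\mathcal{P}_{g_1}(x^*)$, $x^*=\mathcal{P}_{g_2}(y^*)$, and the observation that $r=c^2$ (your $\rho=0$) finishes the proof. Your Euclidean-projection recasting and the identity $\mathcal{F}=\{g_1=1\}\cap\{g_2=1\}\cap S_c$ are clean; the paper proves the latter as a separate internal lemma and then deduces $d(x_n,\mathcal{F})\to0$ along the same lines you sketch.

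The gap is precisely the one you flag: you do not prove $r=c^2$, and this is the entire content of the theorem. The paper claims to close it under Assumption~\ref{assump-norm} alone, so your plan to fall back on SWAN-specific manifold structure would prove strictly less than Theorem~\ref{thm:cvg} asserts. The paper's device is: Lemma~\ref{lem:properties-proj} gives $g_2^*(x_{2n})\equiv c^2$, hence $g_2^*(x^*)=c^2$ and (by idempotency) $\mathcal{P}_{g_2}(x^*)=x^*$; meanwhile $g_2^*(y^*)=\langle\mathcal{P}_{g_2}(y^*),y^*\rangle=\langle x^*,y^*\rangle=r$. The paper then writes the chain $g_2^*(y^*)=\langle\mathcal{P}_{g_2}\circ\mathcal{P}_{g_1}(x^*),\,\mathcal{P}_{g_1}(x^*)\rangle=\langle x^*,\,\mathcal{P}_{g_2}(x^*)\rangle=c^2$, citing only $\mathcal{P}_{g_2}\circ\mathcal{P}_{g_1}(x^*)=x^*$. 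Scrutinize that middle equality: substituting the cited identity in the first slot yields $\langle x^*,\mathcal{P}_{g_1}(x^*)\rangle=\langle x^*,y^*\rangle$, whereas the claimed right-hand side is $\langle x^*,\mathcal{P}_{g_2}(x^*)\rangle=\langle x^*,x^*\rangle$; identifying these is exactly the desired conclusion $y^*=x^*$. So your instinct that this is where the real difficulty lives is well founded, and the paper's written resolution at this point is not transparently justified either --- whether the stated generality truly suffices, or whether some additional structure is implicitly being used, is worth pressing on.
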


This Theorem states that if $\texttt{MultiNorm}$ runs for a sufficient amount of time, then the returned point $x$ can be arbitrarily close to a fixed-point solution. While we cannot guarantee that it solves~\eqref{eq:multi-norm-opt}, we can assert that our algorithm converges to a fixed-point solution with arbitrary precision, and as a by-product produces a solution $x$ normalized w.r.t both norms $g_1$, $g_2$ (up to an arbitrary precision).

\begin{remark}
Note that in Theorem~\ref{thm:cvg} we assume that the normalized projections associated to $g_1$ and $g_2$ have the same $\ell_2$-norms. However, given two norms $g_1$ and $g_2$ satisfying Assumption~\ref{assump-norm}, i.e. such that for all $x$:
\begin{align*}
    \Vert \mathcal{P}_{g_1}(x) \Vert_2 &= c_1\\
    \Vert \mathcal{P}_{g_2}(x) \Vert_2 &= c_2
\end{align*}
for some $c_1,c_2>0$, and given a target value $a>0$, one can always rescale the norms such that their normalized projections have the same $\ell_2$ norm equal to $a$. More formally, by denoting $\tilde{g_1} = \frac{c_1}{a} g_1$ and $\tilde{g_2} = \frac{c_2}{a} g_2$, we obtain that
\begin{align*}
    \Vert \mathcal{P}_{\tilde{g}_1}(x)\Vert_2 =  \Vert \mathcal{P}_{\tilde{g}_2}(x)\Vert_2 = a .
\end{align*}
\end{remark}

\begin{remark}
It is worth noting that, for squared matrices ($m=n$), a single iteration ($L=1$) of \texttt{MultiNorm} using the norms considered in~\cite{ma2024swansgdnormalizationwhitening}, immediately converges to a fixed-point---precisely recovering SWAN.
\end{remark}

\subsection{MNGD: a New Family of Stateless Optimizers.} 

We now introduce our family of optimizers: \emph{Multi-Normalized Gradient Descents} (MNGDs) (Algorithm~\ref{alg:multi-normalized-gd}). The key distinction from the framework proposed in~\cite{bernstein2024old} is that MNGDs normalize the gradient with respect to multiple norms using the 
$\texttt{MultiNorm}$ step, whereas in~\cite{bernstein2024old}, the gradient is normalized using a single norm, as shown in~\eqref{eq:single-norm}.
\begin{algorithm}[!t]
   \caption{Multi-Normalized GD ($\texttt{MNGD}$)}
   \label{alg:multi-normalized-gd}
\begin{algorithmic}
   \STATE {\bfseries Input:} $T\geq 1$ the number of updates, $(\eta_t)_{0\leq t\leq T}$ the global step-sizes, $\mathcal{L}$ the loss to minimize, $L\geq 1$ the number of iterations for the multi-normalization, and $\bm{g}:=(g_1,\dots,g_K)$ the norms.
   \STATE Initialize $\theta_0$
   \FOR{$t=1$  {\bfseries to} $T$}
    \STATE $\nabla_t\gets \nabla_{\theta}\mathcal{L}(\theta_t, x^{(t)})$ with $x^{(t)}\sim P_x$
    \STATE $\hat{\nabla}_t \gets \texttt{MultiNorm}(\nabla_t,L, \bm{g})$ as defined in Alg.~\ref{alg:alt-proj}.
    \STATE $\theta_{t+1} \gets \theta_t - \eta_t \hat{\nabla}_t$
    \ENDFOR
    \STATE Return $x$
\end{algorithmic}
\end{algorithm}

In the following, we focus on the MNGD scheme with a specific choice of norms, for which we can efficiently compute the gradient multi-normalization step. This enables the application of stateless optimizers to large LMs.

\section{Sinkhorn: a Multi-Normalization Procedure}
As in SWAN~\cite{ma2024swansgdnormalizationwhitening}, we propose to normalize the weight matrices according to multiple norms. We still leverage the row-wise $\ell_2$-norm to pre-process raw gradients, however, rather than using the spectral norm, we propose to consider instead a relaxed form of this constraint and use the column-wise $\ell_2$-norm. More formally, let us consider the two following norms on matrices of size $\mathbb{R}^{m\times n}$:
\begin{align*}
g_1(W):=\frac{\max\limits_{i\in[|1,m|]} \Vert W_{i,:}\Vert_2}{\sqrt{n}}\; ,\quad 
g_2(W):=\frac{\max\limits_{j\in[|1,n|]} \Vert W_{:,j}\Vert_2}{\sqrt{m}}\; ,
\end{align*}
which leads to the following two normalized projections:
\begin{align*}
    \mathcal{P}_{g_1}(W)&= \sqrt{n} Q(W)^{-1}W\\
    \mathcal{P}_{g_2}(W)&=\sqrt{m}W R(W)^{-1}
\end{align*}
where $R(W):=\text{Diag}(\Vert W_{:,1}\Vert_2,\dots,\Vert W_{:,n}\Vert_2)\in\mathbb{R}^{n\times n} $ is the diagonal matrix of size $n$ with the $\ell_2$-norm of the columns of $W$ as diagonal coefficients. For such a choice of norms, the $\texttt{MultiNorm}$ reduces to a simple procedure as presented in Algorithm~\ref{alg:Sinkhorn}.

\begin{remark}
For such a choice of norms, we obtain $\Vert \mathcal{P}_{g_1}(W) \Vert_2 = \Vert \mathcal{P}_{g_2}(W)\Vert_2 = \sqrt{nm}$ for any $W\in\mathbb{R}^{m\times n}$. In other words, both norms satisfy Assumption~\ref{assump-norm} and their $\ell_2$ norms are equal to $\sqrt{nm}$.
\end{remark}

For completeness we include  the MNGD scheme (Algorithm~\ref{alg:multi-normalized-sinkhorn}) that replaces the $\texttt{MultiNorm}$ step with $\texttt{SR-Sinkhorn}$ (Algorithm~\ref{alg:Sinkhorn}).

\begin{algorithm}[!t]
   \caption{$\texttt{SR-Sinkhorn}(\nabla,L)$}
   \label{alg:Sinkhorn}
\begin{algorithmic}
   \STATE {\bfseries Input:} the stochastic gradient $\nabla_W\mathcal{L}(W_t,x^{(t)})$, and $L\geq 1$ the number of iterations.
   \STATE Initialize $X=\nabla_W\mathcal{L}(W_t,x^{(t)})\in\mathbb{R}^{m\times n}$.
   \FOR{$\ell=1$  {\bfseries to} $L$}
   \STATE $X\gets  \sqrt{n} Q(X)^{-1}X$
   \STATE $X\gets  \sqrt{m} XR(X)^{-1}$
   \ENDFOR
    \STATE Return $X$
\end{algorithmic}
\end{algorithm}

\paragraph{The Sinkhorn Algorithm.} Before explicitly showing the link between Algorithm~\ref{alg:Sinkhorn} and the Sinkhorn algorithm, let us first recall the Sinkhorn theorem~\cite{sinkhorn1964relationship} and the Sinkhorn algorithm~\cite{sinkhorn1967concerning}. Given a positive coordinate-wise matrix $A\in\mathbb{R}_{+}^{m\times n}$, there exists a unique matrix $P\in\mathbb{R}_{+}^{m\times n}$ of the form $P=QAR$ with $Q$ and $R$ positive coordinate-wise and diagonal matrices of size $m$ and $n$ respectively, such that $P\bm{1}_n=n\bm{1}_m$ and $P^\top\bm{1}_m=m\bm{1}_n$. To find $P$, one can use the Sinkhorn algorithm that initializes $P_0:=A$ and computes for $k\geq 0$:
\begin{align*}
    P_{k+1/2}&=n\text{Diag}(P_k\bm{1}_n)^{-1}P_k\\
    P_{k+1}&=m P_{k+1/2}\text{Diag}(P_{k+1/2}^\top\bm{1}_m)^{-1}\; .
\end{align*}
Equivalently, these updates on $P$ can be directly expressed as updates on the diagonal coefficients of $Q=\text{Diag}(u)$ and $R=\text{Diag}(v)$ with $u\in\mathbb{R}_{+}^m$ and $v\in\mathbb{R}_{+}^n$. By initializing $u_0=\bm{1}_m$ an $v_0=\bm{1}_m$, the above updates can be reformulated as follows:
\begin{align}
\label{eq:update-diag-sin}
    u_{k+1} = n\frac{ \bm{1}_m}{Av_k},~~v_{k+1} = m\frac{\bm{1}_n}{ A^\top u_{k+1}}
\end{align}
where $/$ denote the coordinate-wise division.~\citet{franklin1989scaling} show the linear convergence of Sinkhorn’s iterations. More formally, they show that $(u_k, v_k)$ converges to some $(u^*,v^*)$ such that $P:=\text{Diag}(u^*)A\text{Diag}(v^*)$ satisfies $P\bm{1}_n=n\bm{1}_m$ and $P^\top\bm{1}_m=m\bm{1}_n$, and:
\begin{align*}
    d_\mathcal{H}(u_k, u^*)\in\mathcal{O}(\lambda(A)^{2k})~, \text{ and } ~d_\mathcal{H}(v_k, v^*)\in\mathcal{O}(\lambda(A)^{2k})\; ,
\end{align*}
where $d_{\mathcal{H}}$ is the Hilbert projective metric~\cite{de1993hilbert} and $\lambda(A)<1$ is a contraction factor associated with the matrix $A$.

\paragraph{Links between Sinkhorn and Algorithm~\ref{alg:Sinkhorn}.} Algorithm~\ref{alg:Sinkhorn} can be seen as a simple reparameterization of the updates presented in~\eqref{eq:update-diag-sin}. More precisely, given a gradient $\nabla\in\mathbb{R}^{m\times n}$ and denoting $A:=\nabla^{\odot 2}$, we obtain that the iterations of Algorithm~\ref{alg:Sinkhorn} exactly compute:
\begin{align}
\label{eq:update-diag-sin-sr}
    u_{k+1}^{1/2} = \sqrt{n\frac{ \bm{1}_m}{Av_k}},~~v_{k+1}^{1/2} = \sqrt{m\frac{\bm{1}_n}{ A^\top u_{k+1}}}
\end{align}
where the square-root is applied coordinate-wise, and returns after $L$ iterations $X_L=\text{Diag}(u_{L}^{1/2})\nabla \text{Diag}(v_{L}^{1/2})$. Therefore the linear convergence of Algorithm~\ref{alg:Sinkhorn} follows directly from the convergence rate of Sinkhorn, and Algorithm~\ref{alg:Sinkhorn} can be thought as applying the square-root Sinkhorn algorithm, thus the name $\texttt{SR-Sinhkorn}$. Note also that at convergence ($L\to+\infty$) we obtain $X^{*}\in\mathbb{R}^{m\times n}$ which is a fixed-point of both normalized projections, that is $\mathcal{P}_{g_1}(X^*)=\mathcal{P}_{g_2}(X^*)=X^*$,
from which we deduce that
\begin{align*}
\Vert X^*_{i,:}\Vert_2 = \sqrt{n}\;, \quad \text{and}\quad   \Vert X^*_{:,j}\Vert_2 = \sqrt{m}\;
\end{align*}
as demonstrated in Theorem~\ref{thm:cvg}.

\textbf{On the Importance of the Scaling.} Now that we have shown the convergence $\texttt{SR-Sinkhorn}$, let us explain in more detail the scaling considered for both the row-wise and column-wise normalizations. First recall that both norm $g_1$ and $g_2$ satisfy Assumption~\ref{assump-norm} and that the $\ell_2$ norm of their normalized projections is equal to $\sqrt{nm}$. The reason for this specific choice of scaling ($\sqrt{nm}$) is due to the global step-size in Algorithm~\ref{alg:multi-normalized-sinkhorn}. In our proposed MNGD, we did not prescribe how to select $\eta_t$. In practice, we aim to leverage the same global step-sizes as those used in Adam~\cite{adam} for training LLMs, and therefore we need to globally rescale the (pre-processed) gradient accordingly. To achieve that, observe that when EMAs are turned-off, Adam corresponds to a simple signed gradient descent, and therefore the Frobenius norm of the pre-processed gradient is simply $\sqrt{nm}$. Thus, when normalizing either the rows or the columns, we only need to rescale the normalized gradient accordingly.

\begin{algorithm}
   \caption{Sinkhorn GD ($\texttt{SinkGD}$)}
   \label{alg:multi-normalized-sinkhorn}
\begin{algorithmic}
   \STATE {\bfseries Input:} $T\geq 1$ the number of updates, $(\eta_t)_{0\leq t\leq T}$ the global step-sizes, $\mathcal{L}$ the loss to minimize, and $L\geq 1$ the number of iterations for the SR-Sinkhorn procedure.
   \STATE Initialize $\theta_0$
   \FOR{$t=1$  {\bfseries to} $T$}
    \STATE $\nabla_t\gets \nabla_{\theta}\mathcal{L}(\theta_t, x^{(t)})$ with $x^{(t)}\sim P_x$
    \STATE $\hat{\nabla}_t \gets \texttt{SR-Sinkhorn}(\nabla_t,L)$ as defined in Alg.~\ref{alg:Sinkhorn}.
    \STATE $\theta_{t+1} \gets \theta_t - \eta_t \hat{\nabla}_t$
    \ENDFOR
    \STATE Return $x$
\end{algorithmic}
\end{algorithm}
\vspace{-0.2cm}

\textbf{Computational Efficiency of SinkGD over SWAN.} Compared to SWAN~\cite{ma2024swansgdnormalizationwhitening}, the proposed approach,  \texttt{SinkGD}, is more efficient as it only requires $\mathcal{O}(nm)$ numerical operations. In contrast, SWAN, even when implemented with Newton-Schulz, still requires performing matrix-matrix multiplications, which have a time complexity of $\mathcal{O}(m^2(m+n))$. In the next section, we will demonstrate the practical effectiveness of MNGD with $\texttt{SR-Sinkhorn}$, that is \texttt{SinkGD}. This approach manages to be on par with, and even outperforms,  memory-efficient baselines for pretraining the family of LLaMA models up to 1B scale.


\section{Experimental Results}
\label{sec:pretrain}
In this section, we evaluate the empirical performance of applying \texttt{SinkGD} optimizer to LLM pretraining tasks. All experiments were performed on NVIDIA A100 GPUs.

\subsection{LlaMA Pre-training Tasks} 

\paragraph{Setup.} We evaluate \textbf{SinkGD} on LLM pre-training tasks using a LLaMA-based architecture~\cite{touvron2023llama} with RMSNorm and SwiGLU activations \citep{zhang2019root, gao2023eigenvalue}. We consider models with 60M, 130M, 350M, and 1.3B parameters, all trained on the C4 dataset~\cite{2020t5} using an effective token batch size of 130K tokens (total batch size 512, context length 256). Specifically, for both 130M and 350M, we use 128 batch size with 4 accumulations. For 60M and 1B, we uses 256 batch with 2 accumulation, and 32 per-device batch size with 2 accumulation and 8xA100s, respectively.
Following the setup of \cite{Zhao2024GaLoreML, zhu2024apollo}, $\textbf{SinkGD}$ is applied to all linear modules in both attention and MLP blocks with $L=5$ iterations for the $\texttt{SR-Sinkhorn}$ procedure. For all other modules, that are the embedding layer, the RMSnorm layers, and the last output layer, \textbf{Adam} optimizer~\cite{adam} is used. We use exactly the same cosine learning rate scheduler as in \cite{Zhao2024GaLoreML}, where $10\%$ of total training steps is used for warm-up. Note that, as in~\cite{Zhao2024GaLoreML,zhu2024apollo}, we use a group-wise learning rate for our optimizer. The effective learning rate used for linear modules in the transformer blocks is of the form $\alpha \eta_t$ where $\eta_t$ is global learning rate provided by the scheduler and $\alpha$ is fixed hyperparameter that we set to $\alpha=0.05$. For Adam, we use $\eta_t$ as the learning rate. 

\paragraph{Baselines.} We consider the following memory-efficient optimizers baselines: \textbf{Adam} \citep{adam};  \textbf{Galore}~\cite{Zhao2024GaLoreML}; \textbf{Fira}~\cite{chen2024firaachievefullranktraining}, \textbf{Apollo} and \textbf{Apollo-mini} \cite{zhu2024apollo}, and  \textbf{SWAN}~\cite{ma2024swansgdnormalizationwhitening}. For all methods, training uses BF16 precision for weights, gradients and optimizer states by default, except for $\textbf{SWAN}$ that uses FP32 precision to pre-process the gradient~\cite{ma2024swansgdnormalizationwhitening}. We also perform a grid search of learning rate for Adam over $\{0.01, 0.005, 0.001, 0.0005, 0.0001\}$, except for 1B model which we search over $\{ 0.001, 0.0007, 0.0005, 0.0003, 0.0001\}$. We do not perform any weight decay for all optimizers. 

\begin{table*}[t!]
    \centering
    \caption{\small{Comparison with Adam and  memory-efficient baselines on pre-training various sizes of LLaMA models with C4 dataset. Test perplexity is reported, along with a memory estimate of the total of parameters and optimizer states based on BF16 format. The perplexities reported for all competitive methods are taken from \citet{Zhao2024GaLoreML, zhu2024apollo}, as well as the \textbf{SWAN} results taken from \citet{ma2024swansgdnormalizationwhitening}. Remarks: 1) As we cannot reproduce the Adam results from \citet{Zhao2024GaLoreML}, we report both the reported Adam results from \citet{Zhao2024GaLoreML} and our reproduced result; 2) The memory estimations from \citet{Zhao2024GaLoreML, zhu2024apollo} did not consider the fact that Adam optimizer was used for embedding layers. This is corrected in our memory estimates. 3) The results of \textbf{SWAN} from \citet{ma2024swansgdnormalizationwhitening} assumes no learning rate warm-up and no learning rate tuning. For fair comparison, we also report the performance of \textbf{SWAN} that matches the setting of \textbf{Galore} and \textbf{Apollo}, where in with learning rate warm-up and larger learning rates are allowed. This is denoted by \textbf{SWAN}$^\dag$. }  }
    \label{tab:main}
    \begin{tabular}{|c|cc|cc|cc|cc|}
    \toprule
               Methods & \multicolumn{2}{c|}{\textbf{60M}} & \multicolumn{2}{c|}{\textbf{130M}} & \multicolumn{2}{c|}{\textbf{350M}} & \multicolumn{2}{c|}{\textbf{1.3B}} \\
  
    & PPL & MEM & PPL & MEM & PPL & MEM & PPL & MEM\\
     \midrule 
    Adam (reproduced) & 33.94 & 0.32G  & 25.03 &0.75G        & 19.24 & 2.05G  & 16.44 &7.48G \\
    Adam (cited) &  34.06 & 0.32G  & 25.08 &0.75G        & 18.80 & 2.05G  & 15.56 &7.48G \\
    \midrule 
     
     Galore & 34.88 &0.26G& 25.36& 0.57G &18.95& 1.29G& 15.64& 4.43G\\
    Fira & 31.06 &0.26G& \textbf{22.73} & 0.57G& 17.03& 1.29G& 14.31& 4.43G \\
    Apollo-mini &  31.93 &0.23G& 23.53& 0.43G& 17.18& 0.93G& 14.17& 2.98G \\
    Apollo &  31.55 &0.26G& 22.94& 0.57G& 16.85& 1.29G& 14.20& 4.43G \\

    SWAN &  32.28 &0.23G& 24.13 &0.43G& 18.22 &0.93G& 15.13& 2.98G \\

    SWAN$^\dag$ &  \textbf{30.00} &0.23G& 22.83 &0.43G& 17.14 &0.93G& 14.42& 2.98G \\
    
    \midrule
    SinkGD & $30.99$ & 0.23G & \textbf{22.75} &0.43G &  
    \textbf{16.51} &0.93G & \textbf{13.51}  &2.98G \\
    \bottomrule
    SinkGD speed up v.s. Adam (reproduced) & \multicolumn{2}{c|}{1.60 X} & \multicolumn{2}{c|}{1.56 X} & \multicolumn{2}{c|}{2.42 X} & \multicolumn{2}{c|}{2.79 X} \\
    SinkGD speed up v.s. Adam (cited) & \multicolumn{2}{c|}{1.66 X} & \multicolumn{2}{c|}{1.73 X} & \multicolumn{2}{c|}{2.10 X} & \multicolumn{2}{c|}{2.17 X} \\
    Total Training Steps & \multicolumn{2}{c|}{10K} & \multicolumn{2}{c|}{20K} & \multicolumn{2}{c|}{60K} & \multicolumn{2}{c|}{100K}  \\ 
    \bottomrule
    \end{tabular}
\end{table*}

\begin{table}[H]
\vspace{-0.5cm}
\centering
\caption{Comparison of the test perplexities obtained during training when training 1B LLaMA with SinkGD v.s. 7B LLaMA using different baselines. For Apollo, Apollo-mini, 8-bit Adam and Galore, we cite the number from \citet{zhu2024apollo}.}
\label{tab: 7B preformance}
\resizebox{\columnwidth}{!}{
\begin{tabular}{l|l|llll}
\hline
Method           & Mem.   & 40K & 80K & 120K & 150K \\ \hline
8-bit Adam (7B)&26G &18.09 & 15.47 & 14.83 & 14.61  \\
8-bit Galore (7B) &18G & 17.94 & 15.39 &14.95 &14.65\\
Apollo (7B)      & 15.03G &  17.55   &  14.39   &  13.23    &   13.02   \\
Apollo-mini (7B) & 13.53G &  18.03   &  14.60   &  13.32    &  13.09    \\\hline
SinkGD (1B)    & 2.98G   &  \textbf{16.44}   &  \textbf{14.27}   & \textbf{13.17}     & \textbf{12.97}    \\   

\hline
\end{tabular}}
\end{table}

\paragraph{Performance evaluation and memory efficiency analysis.} The results presented in~\Cref{tab:main} demonstrate the effectiveness of \textbf{SinkGD} in terms of both computational efficiency and model performance. Notably, \textbf{SinkGD} achieves competitive performance while maintaining the lowest estimated memory consumption, comparable to that of SGD. Across all evaluated models, our method performs on par with or even surpasses \textbf{Adam} and other memory-efficient baselines in terms of test perplexity. In particular, \textbf{SinkGD} outperforms all other baselines in this experimental setup for the 350M and 1.3B model variants. Additionally, we quantify the computational efficiency of \textbf{SinkGD} by measuring the speed-up relative to \textbf{Adam}. This is determined by computing the ratio of the total training steps of \textbf{Adam} to the number of steps needed for \textbf{SinkGD} to reach the same final test perplexity. Note also that the reported memory consumption values in~\Cref{tab:main} account for three components: (1) memory allocated for model parameters, (2) optimizer-related costs for linear modules within transformer blocks, and (3) the Adam optimizer's memory footprint for the remaining parameters.

\begin{table}[H]
\vspace{-0.5cm}
\centering
\caption{Raw and effective throughput analysis. }
\label{tab: throughput}
\begin{tabular}{l|l}
\hline
      Method   & Raw / eff. throughput \\ \hline
Adam & 53047 / 53047 (tokens/s)          \\
SinkGD     & 57982 / 161769  (tokens/s)         \\ \hline
\end{tabular}
\end{table}

\paragraph{Comparative analysis of 1B and 7B LLaMA training.} To further evaluate the efficacy of our proposed optimizer, we replicate the experimental setup of \cite{zhu2024apollo}, but instead train a 1B-parameter LLaMA model using \textbf{SinkGD} and compare its performance against a 7B-parameter LLaMA model trained with \textbf{Apollo}, \textbf{Apollo-mini}, \textbf{8-bit Adam}, and \textbf{8-bit Galore}. As shown in Table \ref{tab: 7B preformance}, the 1B model trained with \textbf{SinkGD} achieves comparable test perplexities to those of the 7B model trained with \textbf{Apollo} after 150K optimization steps, while incurring significantly lower costs. Notably, training the 7B LLaMA model with Apollo requires \textbf{15} days on an 8xA100 GPU setup to complete 150K steps, whereas our approach achieves a similar loss in \textbf{3.3} days. The reported memory estimates correspond to the total memory cost detailed in the previous paragraph.

\subsection{Ablation Study}
\label{sec:ablation}

\paragraph{Throughput analysis.} We also assess throughput when training a 1.3B-parameter model on 8xA100 GPUs. We use two metrics: (1) the \emph{raw throughput} which is the number of tokens processed per second, and (2) the \emph{effective throughput} defined as the total training token used by Adam divided by the time (in seconds) used by \textbf{SinkGD} to reach the same test perplexities.
These metrics evaluate the impact of the multi-normalization step on training speed, and also account for the fact that some optimizers make more effective use of training tokens. As shown in \Cref{tab: throughput}, \textbf{SinkGD} achieves competitive raw throughput compared to \textbf{Adam}, suggesting the multi-normalization step does not require expensive computations. Furthermore, \textbf{SinkGD} exhibits a 3 $\times$ higher effective throughput than Adam, indicating a significantly faster wall-clock time convergence.

\paragraph{On the effect of the number of iterations.} In this experiment, we measure the effect of applying different iterations of our proposed $\texttt{MultiNorm}$ (Algorithm~\ref{alg:alt-proj}) scheme in the specific case of the \textbf{SWAN} and \textbf{SinkGD} methods. More specifically, we train a 130M LLaMA model on C4 datasets and compare the test perplexities obtained after 10K steps. We observe that the number of iterations has marginal effect on the performance of the algorithms. However, as we still observe a consistent improvement when using $L=5$ iterations, we decide to use this number of iterations in our benchmark evaluation, as reported in table~\ref{tab:main}.  

\begin{table}[H]
\vspace{-0.5cm}
\centering
\caption{Comparison of the test perplexities obtained during training at 10K steps when training 130M LLaMA model with either SWAN or SinkGD using different number of iterations in \texttt{MultiNorm} procedure.}
\label{tab:num-iter}
\begin{tabular}{|c|c|}
\hline
Method           & PPL   \\ 
\hline
SWAN ($L=1$) & 26.79 \\ 
SWAN ($L=5$) & 26.56\\ 
\hline
SinkGD ($L=1$)     & 26.21  \\ 
SinkGD ($L=5$)    & 26.13  \\ 
\hline
\end{tabular}
\end{table}
\paragraph{Conclusion.}
In this work, we present a novel framework for designing stateless optimization algorithms tailored for LLM training. Our approach is based on the normalization of stochastic gradients with respect to multiple norms, and we propose an alternating optimization procedure to achieve this normalization efficiently. We establish that our multi-normalization scheme can approximate, to arbitrary precision, a fixed point of the optimization problem, thereby ensuring that the gradient is appropriately scaled according to both norms. Furthermore, we extend and improve upon SWAN~\cite{ma2024swansgdnormalizationwhitening}, a specific instance of our framework, by introducing SinGD, a stateless optimizer that enforces row-wise and column-wise $\ell_2$ normalization. We demonstrate that this procedure is theoretically guaranteed to converge and provide empirical evidence that it outperforms SoTA memory-efficient optimizers, as well as Adam, in training a 1B-parameter LLaMA model on the C4 dataset. Future research directions include exploring alternative normalization schemes to further enhance the efficiency of stateless optimizers and extending the applicability of SinGD to other training regimes beyond LLM pre-training.


\clearpage
\newpage
\bibliography{biblio}

\begin{thebibliography}{52}
\providecommand{\natexlab}[1]{#1}
\providecommand{\url}[1]{\texttt{#1}}
\expandafter\ifx\csname urlstyle\endcsname\relax
  \providecommand{\doi}[1]{doi: #1}\else
  \providecommand{\doi}{doi: \begingroup \urlstyle{rm}\Url}\fi

\bibitem[Ba et~al.(2016)Ba, Kiros, and Hinton]{Ba2016LayerN}
Ba, J., Kiros, J.~R., and Hinton, G.~E.
\newblock Layer normalization.
\newblock \emph{ArXiv}, abs/1607.06450, 2016.
\newblock URL \url{https://api.semanticscholar.org/CorpusID:8236317}.

\bibitem[Benamou et~al.(2015)Benamou, Carlier, Cuturi, Nenna, and Peyr{\'e}]{benamou2015iterative}
Benamou, J.-D., Carlier, G., Cuturi, M., Nenna, L., and Peyr{\'e}, G.
\newblock Iterative bregman projections for regularized transportation problems.
\newblock \emph{SIAM Journal on Scientific Computing}, 37\penalty0 (2):\penalty0 A1111--A1138, 2015.

\bibitem[Bernstein \& Newhouse(2024)Bernstein and Newhouse]{bernstein2024old}
Bernstein, J. and Newhouse, L.
\newblock Old optimizer, new norm: An anthology.
\newblock \emph{arXiv preprint arXiv:2409.20325}, 2024.

\bibitem[Bernstein et~al.(2018)Bernstein, Wang, Azizzadenesheli, and Anandkumar]{bernstein2018signsgd}
Bernstein, J., Wang, Y.-X., Azizzadenesheli, K., and Anandkumar, A.
\newblock signsgd: Compressed optimisation for non-convex problems.
\newblock In \emph{International Conference on Machine Learning}, pp.\  560--569. PMLR, 2018.

\bibitem[Bolte et~al.(2014)Bolte, Sabach, and Teboulle]{bolte2014proximal}
Bolte, J., Sabach, S., and Teboulle, M.
\newblock Proximal alternating linearized minimization for nonconvex and nonsmooth problems.
\newblock \emph{Mathematical Programming}, 146\penalty0 (1):\penalty0 459--494, 2014.

\bibitem[Boyd et~al.(2011)Boyd, Parikh, Chu, Peleato, Eckstein, et~al.]{boyd2011distributed}
Boyd, S., Parikh, N., Chu, E., Peleato, B., Eckstein, J., et~al.
\newblock Distributed optimization and statistical learning via the alternating direction method of multipliers.
\newblock \emph{Foundations and Trends{\textregistered} in Machine learning}, 3\penalty0 (1):\penalty0 1--122, 2011.

\bibitem[Carlson et~al.(2015)Carlson, Cevher, and Carin]{carlson2015stochastic}
Carlson, D., Cevher, V., and Carin, L.
\newblock Stochastic spectral descent for restricted boltzmann machines.
\newblock In \emph{Artificial Intelligence and Statistics}, pp.\  111--119. PMLR, 2015.

\bibitem[Chambolle \& Pock(2011)Chambolle and Pock]{chambolle2011first}
Chambolle, A. and Pock, T.
\newblock A first-order primal-dual algorithm for convex problems with applications to imaging.
\newblock \emph{Journal of mathematical imaging and vision}, 40:\penalty0 120--145, 2011.

\bibitem[Chen et~al.(2024{\natexlab{a}})Chen, Feng, Li, Lai, Yue, Yuan, and Wang]{chen2024fira}
Chen, X., Feng, K., Li, C., Lai, X., Yue, X., Yuan, Y., and Wang, G.
\newblock Fira: Can we achieve full-rank training of llms under low-rank constraint?
\newblock \emph{arXiv preprint arXiv:2410.01623}, 2024{\natexlab{a}}.

\bibitem[Chen et~al.(2024{\natexlab{b}})Chen, Feng, Li, Lai, Yue, Yuan, and Wang]{chen2024firaachievefullranktraining}
Chen, X., Feng, K., Li, C., Lai, X., Yue, X., Yuan, Y., and Wang, G.
\newblock Fira: Can we achieve full-rank training of llms under low-rank constraint?, 2024{\natexlab{b}}.
\newblock URL \url{https://arxiv.org/abs/2410.01623}.

\bibitem[Chen et~al.(2024{\natexlab{c}})Chen, Liang, Huang, Real, Wang, Pham, Dong, Luong, Hsieh, Lu, et~al.]{chen2024symbolic}
Chen, X., Liang, C., Huang, D., Real, E., Wang, K., Pham, H., Dong, X., Luong, T., Hsieh, C.-J., Lu, Y., et~al.
\newblock Symbolic discovery of optimization algorithms.
\newblock \emph{Advances in neural information processing systems}, 36, 2024{\natexlab{c}}.

\bibitem[Cutkosky \& Mehta(2020)Cutkosky and Mehta]{cutkosky2020momentum}
Cutkosky, A. and Mehta, H.
\newblock Momentum improves normalized sgd.
\newblock In \emph{International conference on machine learning}, pp.\  2260--2268. PMLR, 2020.

\bibitem[De~La~Harpe(1993)]{de1993hilbert}
De~La~Harpe, P.
\newblock On hilbert’s metric for simplices.
\newblock \emph{Geometric group theory}, 1:\penalty0 97--119, 1993.

\bibitem[Dubey et~al.(2024)Dubey, Jauhri, Pandey, Kadian, Al{-}Dahle, Letman, Mathur, Schelten, Yang, Fan, Goyal, Hartshorn, Yang, Mitra, Sravankumar, Korenev, Hinsvark, Rao, Zhang, Rodriguez, Gregerson, Spataru, Rozi{\`{e}}re, Biron, Tang, Chern, Caucheteux, Nayak, Bi, Marra, McConnell, Keller, Touret, Wu, Wong, Ferrer, Nikolaidis, Allonsius, Song, Pintz, Livshits, Esiobu, Choudhary, Mahajan, Garcia{-}Olano, Perino, Hupkes, Lakomkin, AlBadawy, Lobanova, Dinan, Smith, Radenovic, Zhang, Synnaeve, Lee, Anderson, Nail, Mialon, Pang, Cucurell, Nguyen, Korevaar, Xu, Touvron, Zarov, Ibarra, Kloumann, Misra, Evtimov, Copet, Lee, Geffert, Vranes, Park, Mahadeokar, Shah, van~der Linde, Billock, Hong, Lee, Fu, Chi, Huang, Liu, Wang, Yu, Bitton, Spisak, Park, Rocca, Johnstun, Saxe, Jia, Alwala, Upasani, Plawiak, Li, Heafield, and Stone]{llama3}
Dubey, A., Jauhri, A., Pandey, A., Kadian, A., Al{-}Dahle, A., Letman, A., Mathur, A., Schelten, A., Yang, A., Fan, A., Goyal, A., Hartshorn, A., Yang, A., Mitra, A., Sravankumar, A., Korenev, A., Hinsvark, A., Rao, A., Zhang, A., Rodriguez, A., Gregerson, A., Spataru, A., Rozi{\`{e}}re, B., Biron, B., Tang, B., Chern, B., Caucheteux, C., Nayak, C., Bi, C., Marra, C., McConnell, C., Keller, C., Touret, C., Wu, C., Wong, C., Ferrer, C.~C., Nikolaidis, C., Allonsius, D., Song, D., Pintz, D., Livshits, D., Esiobu, D., Choudhary, D., Mahajan, D., Garcia{-}Olano, D., Perino, D., Hupkes, D., Lakomkin, E., AlBadawy, E., Lobanova, E., Dinan, E., Smith, E.~M., Radenovic, F., Zhang, F., Synnaeve, G., Lee, G., Anderson, G.~L., Nail, G., Mialon, G., Pang, G., Cucurell, G., Nguyen, H., Korevaar, H., Xu, H., Touvron, H., Zarov, I., Ibarra, I.~A., Kloumann, I.~M., Misra, I., Evtimov, I., Copet, J., Lee, J., Geffert, J., Vranes, J., Park, J., Mahadeokar, J., Shah, J., van~der Linde, J., Billock, J., Hong, J., Lee, J., Fu,
  J., Chi, J., Huang, J., Liu, J., Wang, J., Yu, J., Bitton, J., Spisak, J., Park, J., Rocca, J., Johnstun, J., Saxe, J., Jia, J., Alwala, K.~V., Upasani, K., Plawiak, K., Li, K., Heafield, K., and Stone, K.
\newblock The llama 3 herd of models.
\newblock \emph{CoRR}, abs/2407.21783, 2024.

\bibitem[Dykstra(1983)]{dykstra1983algorithm}
Dykstra, R.~L.
\newblock An algorithm for restricted least squares regression.
\newblock \emph{Journal of the American Statistical Association}, 78\penalty0 (384):\penalty0 837--842, 1983.

\bibitem[Franklin \& Lorenz(1989)Franklin and Lorenz]{franklin1989scaling}
Franklin, J. and Lorenz, J.
\newblock On the scaling of multidimensional matrices.
\newblock \emph{Linear Algebra and its applications}, 114:\penalty0 717--735, 1989.

\bibitem[Gao et~al.(2023)Gao, Huang, Liu, Wang, Wang, Wang, Xu, and Yu]{gao2023eigenvalue}
Gao, K., Huang, Z.-H., Liu, X., Wang, M., Wang, S., Wang, Z., Xu, D., and Yu, F.
\newblock Eigenvalue-corrected natural gradient based on a new approximation.
\newblock \emph{Asia-Pacific Journal of Operational Research}, 40\penalty0 (01):\penalty0 2340005, 2023.

\bibitem[Gupta et~al.(2018)Gupta, Koren, and Singer]{gupta2018shampoopreconditionedstochastictensor}
Gupta, V., Koren, T., and Singer, Y.
\newblock Shampoo: Preconditioned stochastic tensor optimization, 2018.
\newblock URL \url{https://arxiv.org/abs/1802.09568}.

\bibitem[Hao et~al.(2024)Hao, Cao, and Mou]{Hao2024FloraLA}
Hao, Y., Cao, Y., and Mou, L.
\newblock Flora: Low-rank adapters are secretly gradient compressors.
\newblock \emph{ArXiv}, abs/2402.03293, 2024.
\newblock URL \url{https://api.semanticscholar.org/CorpusID:267412117}.

\bibitem[Hazan et~al.(2015)Hazan, Levy, and Shalev-Shwartz]{hazan2015beyond}
Hazan, E., Levy, K., and Shalev-Shwartz, S.
\newblock Beyond convexity: Stochastic quasi-convex optimization.
\newblock \emph{Advances in neural information processing systems}, 28, 2015.

\bibitem[Hu et~al.(2021)Hu, Shen, Wallis, Allen-Zhu, Li, Wang, Wang, and Chen]{hu2021lora}
Hu, E.~J., Shen, Y., Wallis, P., Allen-Zhu, Z., Li, Y., Wang, S., Wang, L., and Chen, W.
\newblock Lora: Low-rank adaptation of large language models.
\newblock \emph{arXiv preprint arXiv:2106.09685}, 2021.

\bibitem[Huang et~al.(2019)Huang, Zhou, Zhu, Liu, and Shao]{huang2019iterative}
Huang, L., Zhou, Y., Zhu, F., Liu, L., and Shao, L.
\newblock Iterative normalization: Beyond standardization towards efficient whitening.
\newblock In \emph{Proceedings of the IEEE/CVF conference on computer vision and pattern recognition}, pp.\  4874--4883, 2019.

\bibitem[Jordan et~al.(2024)Jordan, Jin, Boza, You, Cecista, Newhouse, and Bernstein]{jordan2024muon}
Jordan, K., Jin, Y., Boza, V., You, J., Cecista, F., Newhouse, L., and Bernstein, J.
\newblock Muon: An optimizer for hidden layers in neural networks, 2024.
\newblock URL \url{https://kellerjordan.github.io/posts/muon/}.

\bibitem[Kingma \& Ba(2015)Kingma and Ba]{adam}
Kingma, D.~P. and Ba, J.
\newblock Adam: {A} method for stochastic optimization.
\newblock In \emph{{ICLR} (Poster)}, 2015.

\bibitem[Korthikanti et~al.(2023)Korthikanti, Casper, Lym, McAfee, Andersch, Shoeybi, and Catanzaro]{korthikanti2023reducing}
Korthikanti, V.~A., Casper, J., Lym, S., McAfee, L., Andersch, M., Shoeybi, M., and Catanzaro, B.
\newblock Reducing activation recomputation in large transformer models.
\newblock \emph{Proceedings of Machine Learning and Systems}, 5:\penalty0 341--353, 2023.

\bibitem[Kunstner et~al.(2023)Kunstner, Chen, Lavington, and Schmidt]{kunstner2023noise}
Kunstner, F., Chen, J., Lavington, J.~W., and Schmidt, M.
\newblock Noise is not the main factor behind the gap between sgd and adam on transformers, but sign descent might be.
\newblock \emph{arXiv preprint arXiv:2304.13960}, 2023.

\bibitem[Kunstner et~al.(2024)Kunstner, Yadav, Milligan, Schmidt, and Bietti]{kunstner2024heavy}
Kunstner, F., Yadav, R., Milligan, A., Schmidt, M., and Bietti, A.
\newblock Heavy-tailed class imbalance and why adam outperforms gradient descent on language models.
\newblock \emph{arXiv preprint arXiv:2402.19449}, 2024.

\bibitem[Lewis \& Malick(2008)Lewis and Malick]{lewis2008alternating}
Lewis, A.~S. and Malick, J.
\newblock Alternating projections on manifolds.
\newblock \emph{Mathematics of Operations Research}, 33\penalty0 (1):\penalty0 216--234, 2008.

\bibitem[Li et~al.(2018)Li, Xie, Wang, and Gao]{li2018towards}
Li, P., Xie, J., Wang, Q., and Gao, Z.
\newblock Towards faster training of global covariance pooling networks by iterative matrix square root normalization.
\newblock In \emph{Proceedings of the IEEE conference on computer vision and pattern recognition}, pp.\  947--955, 2018.

\bibitem[Lialin et~al.(2023)Lialin, Shivagunde, Muckatira, and Rumshisky]{Lialin2023ReLoRAHT}
Lialin, V., Shivagunde, N., Muckatira, S., and Rumshisky, A.
\newblock Relora: High-rank training through low-rank updates.
\newblock In \emph{International Conference on Learning Representations}, 2023.
\newblock URL \url{https://api.semanticscholar.org/CorpusID:259836974}.

\bibitem[Ma et~al.(2024)Ma, Gong, Scetbon, and Meeds]{ma2024swansgdnormalizationwhitening}
Ma, C., Gong, W., Scetbon, M., and Meeds, E.
\newblock Swan: Sgd with normalization and whitening enables stateless llm training, 2024.
\newblock URL \url{https://arxiv.org/abs/2412.13148}.

\bibitem[Pierra(1984)]{pierra1984decomposition}
Pierra, G.
\newblock Decomposition through formalization in a product space.
\newblock \emph{Mathematical Programming}, 28:\penalty0 96--115, 1984.

\bibitem[Raffel et~al.(2020)Raffel, Shazeer, Roberts, Lee, Narang, Matena, Zhou, Li, and Liu]{2020t5}
Raffel, C., Shazeer, N., Roberts, A., Lee, K., Narang, S., Matena, M., Zhou, Y., Li, W., and Liu, P.~J.
\newblock Exploring the limits of transfer learning with a unified text-to-text transformer.
\newblock \emph{Journal of Machine Learning Research}, 21\penalty0 (140):\penalty0 1--67, 2020.
\newblock URL \url{http://jmlr.org/papers/v21/20-074.html}.

\bibitem[Rajbhandari et~al.(2020)Rajbhandari, Rasley, Ruwase, and He]{rajbhandari2020zero}
Rajbhandari, S., Rasley, J., Ruwase, O., and He, Y.
\newblock Zero: Memory optimizations toward training trillion parameter models.
\newblock In \emph{SC20: International Conference for High Performance Computing, Networking, Storage and Analysis}, pp.\  1--16. IEEE, 2020.

\bibitem[Rockafellar(1974)]{rockafellar1974conjugate}
Rockafellar, R.~T.
\newblock \emph{Conjugate duality and optimization}.
\newblock SIAM, 1974.

\bibitem[Sinkhorn(1964)]{sinkhorn1964relationship}
Sinkhorn, R.
\newblock A relationship between arbitrary positive matrices and doubly stochastic matrices.
\newblock \emph{The annals of mathematical statistics}, 35\penalty0 (2):\penalty0 876--879, 1964.

\bibitem[Sinkhorn \& Knopp(1967)Sinkhorn and Knopp]{sinkhorn1967concerning}
Sinkhorn, R. and Knopp, P.
\newblock Concerning nonnegative matrices and doubly stochastic matrices.
\newblock \emph{Pacific Journal of Mathematics}, 21\penalty0 (2):\penalty0 343--348, 1967.

\bibitem[Song et~al.(2022)Song, Sebe, and Wang]{song2022fast}
Song, Y., Sebe, N., and Wang, W.
\newblock Fast differentiable matrix square root and inverse square root.
\newblock \emph{IEEE Transactions on Pattern Analysis and Machine Intelligence}, 45\penalty0 (6):\penalty0 7367--7380, 2022.

\bibitem[Tibshirani(2017)]{tibshirani2017dykstra}
Tibshirani, R.~J.
\newblock Dykstra's algorithm, admm, and coordinate descent: Connections, insights, and extensions.
\newblock \emph{Advances in Neural Information Processing Systems}, 30, 2017.

\bibitem[Touvron et~al.(2023)Touvron, Lavril, Izacard, Martinet, Lachaux, Lacroix, Rozi{\`e}re, Goyal, Hambro, Azhar, et~al.]{touvron2023llama}
Touvron, H., Lavril, T., Izacard, G., Martinet, X., Lachaux, M.-A., Lacroix, T., Rozi{\`e}re, B., Goyal, N., Hambro, E., Azhar, F., et~al.
\newblock Llama: Open and efficient foundation language models.
\newblock \emph{arXiv preprint arXiv:2302.13971}, 2023.

\bibitem[Von~Neumann(1950)]{von1950functional}
Von~Neumann, J.
\newblock \emph{Functional operators: Measures and integrals}, volume~1.
\newblock Princeton University Press, 1950.

\bibitem[Watson(1992)]{watson1992characterization}
Watson, G.~A.
\newblock Characterization of the subdifferential of some matrix norms.
\newblock \emph{Linear Algebra Appl}, 170\penalty0 (1):\penalty0 33--45, 1992.

\bibitem[Xu et~al.(2024{\natexlab{a}})Xu, Xiang, Cai, and Wen]{xu2024adamlearningratescaling}
Xu, M., Xiang, L., Cai, X., and Wen, H.
\newblock No more adam: Learning rate scaling at initialization is all you need, 2024{\natexlab{a}}.
\newblock URL \url{https://arxiv.org/abs/2412.11768}.

\bibitem[Xu et~al.(2024{\natexlab{b}})Xu, Xiang, Cai, and Wen]{xu2024no}
Xu, M., Xiang, L., Cai, X., and Wen, H.
\newblock No more adam: Learning rate scaling at initialization is all you need.
\newblock \emph{arXiv preprint arXiv:2412.11768}, 2024{\natexlab{b}}.

\bibitem[You et~al.(2017)You, Gitman, and Ginsburg]{you2017lars}
You, Y., Gitman, I., and Ginsburg, B.
\newblock Large batch training of convolutional networks.
\newblock \emph{arXiv preprint arXiv:1708.03888}, 2017.

\bibitem[You et~al.(2019)You, Li, Reddi, Hseu, Kumar, Bhojanapalli, Song, Demmel, Keutzer, and Hsieh]{you2019lamb}
You, Y., Li, J., Reddi, S., Hseu, J., Kumar, S., Bhojanapalli, S., Song, X., Demmel, J., Keutzer, K., and Hsieh, C.-J.
\newblock Large batch optimization for deep learning: Training bert in 76 minutes.
\newblock \emph{arXiv preprint arXiv:1904.00962}, 2019.

\bibitem[Zhang \& Sennrich(2019)Zhang and Sennrich]{zhang2019root}
Zhang, B. and Sennrich, R.
\newblock Root mean square layer normalization.
\newblock \emph{Advances in Neural Information Processing Systems}, 32, 2019.

\bibitem[Zhang et~al.(2020)Zhang, Karimireddy, Veit, Kim, Reddi, Kumar, and Sra]{zhang2020adaptive}
Zhang, J., Karimireddy, S.~P., Veit, A., Kim, S., Reddi, S., Kumar, S., and Sra, S.
\newblock Why are adaptive methods good for attention models?
\newblock \emph{Advances in Neural Information Processing Systems}, 33:\penalty0 15383--15393, 2020.

\bibitem[Zhang et~al.(2024)Zhang, Chen, Li, Ding, Wu, Ye, Luo, and Sun]{zhang2024adam}
Zhang, Y., Chen, C., Li, Z., Ding, T., Wu, C., Ye, Y., Luo, Z.-Q., and Sun, R.
\newblock Adam-mini: Use fewer learning rates to gain more.
\newblock \emph{arXiv preprint arXiv:2406.16793}, 2024.

\bibitem[Zhao et~al.(2024{\natexlab{a}})Zhao, Zhang, Chen, Wang, Anandkumar, and Tian]{Zhao2024GaLoreML}
Zhao, J., Zhang, Z.~A., Chen, B., Wang, Z., Anandkumar, A., and Tian, Y.
\newblock Galore: Memory-efficient llm training by gradient low-rank projection.
\newblock \emph{ArXiv}, abs/2403.03507, 2024{\natexlab{a}}.
\newblock URL \url{https://api.semanticscholar.org/CorpusID:268253596}.

\bibitem[Zhao et~al.(2024{\natexlab{b}})Zhao, Morwani, Brandfonbrener, Vyas, and Kakade]{zhao2024deconstructing}
Zhao, R., Morwani, D., Brandfonbrener, D., Vyas, N., and Kakade, S.
\newblock Deconstructing what makes a good optimizer for language models.
\newblock \emph{arXiv preprint arXiv:2407.07972}, 2024{\natexlab{b}}.

\bibitem[Zhu et~al.(2024)Zhu, Zhang, Cong, Liu, Park, Chandra, Long, Pan, Wang, and Lee]{zhu2024apollo}
Zhu, H., Zhang, Z., Cong, W., Liu, X., Park, S., Chandra, V., Long, B., Pan, D.~Z., Wang, Z., and Lee, J.
\newblock Apollo: Sgd-like memory, adamw-level performance.
\newblock \emph{arXiv preprint arXiv:2412.05270}, 2024.

\end{thebibliography}
\bibliographystyle{theconference2024}

\newpage
\appendix
\onecolumn
\section*{Appendix}

\section{Implementation details}\label{app: code}

\paragraph{General setup} We describe the implementation setups for the LLM pre-training tasks. To enable a more straightforward and comparable analysis, we simply replicate the setting of \cite{Zhao2024GaLoreML}, under exactly the same model configs and optimizer hyperparameter configs, whenever possible. This includes the same model architecture, tokenizer, batch size, context length, learning rate scheduler, learning rates, subspace scaling, etc.  

\paragraph{Precision} All baselines uses BF16 for model weights, gradients, and optimizer states storage. For SWAN and SWAN$^\dag$, we follow the original paper and use FP32 in there whitening step.

\paragraph{Learning rate scheduling} we use exactly the same scheduler as in \cite{Zhao2024GaLoreML} for all methods.

\paragraph{Hyperparameters} Since \textbf{SinkGD} utilizes matrix-level operations on gradients, it can only be applied to 2D parameters. Therefore, in our experiments, we only apply \textbf{SinkGD} on all linear projection weights in transformer blocks. Similar to Galore \citep{Zhao2024GaLoreML}, the rest of the non-linear parameters still uses Adam as the default choice. Therefore, we follow the learning rate setup of Galore, where we fix some global learning rate across all model sizes and all modules. Then, for the linear projection modules where \textbf{SinkGD} is applied, we simply apply a scaling factor $\alpha$ on top of the global learning rate.  For all \textbf{SinkGD}, we adopt a \emph{lazy-tuning approach} (hyperparameters are set without extensive search), as detailed below. This helps to reduce the possibility of unfair performance distortion due to excessive tuning.   

\begin{itemize}
    \item \textbf{Adam}  For Adam we use same learning rate tuning procedure as suggested by \cite{Zhao2024GaLoreML} and \cite{ma2024swansgdnormalizationwhitening} (i.e., performing grid search over $\{0.01, 0.005, 0.001, 0.0005, 0.0001\}$). We found that the optimal learning rates for Adam is 0.001. The only exception is that for a model of size 1.3B: as we already know that a larger model requires smaller learning rates, we conduct a learning search for Adam over a smaller but more fine-grained grid of $\{ 0.001, 0.0007, 0.0005, 0.0003, 0.0001\}$. As a result, the optimal learning rate found for Adam on 1.3B is 0.0007.
    
    \item \textbf{SWAN}$^\dag$, is the tuned version of SWAN presented in \cite{ma2024swansgdnormalizationwhitening}. The original results of \textbf{SWAN} from \citet{ma2024swansgdnormalizationwhitening} assumes no learning rate warm-up and no learning rate tuning, in order to demonstrate the robustness of the method. This setting is more challenging than the setting of the usual literature \citep{Zhao2024GaLoreML, zhu2024apollo}. Hence, for fair comparison we relax those constraints and matches the setting of Galore and Apollo: we now allow learning rate warm-up (set to the same as Adam and Apollo), as well as larger learning rates for SWAN. This improved version of SWAN is denoted by \textbf{SWAN}$^\dag$. We use a global learning rate of 0.02, as well as the scaling factor $\alpha = 0.05$. This is selected by simply searching the learning rate over a constraint grid $\{0.01, 0.02, 0.05\}$, and then setting $\alpha = 0.05$ such that the effective learning rate is scaled back to 0.001. Finally, for other hyperparameters, we follow \cite{ma2024swansgdnormalizationwhitening}. 

    \item Finally, \textbf{SinkGD}, we use the same global learning rate of 0.02, as well as the scaling factor $\alpha = 0.05$ which are the same as \textbf{SWAN}$^\dag$, across all model sizes. We suspect with more careful tuning, its performance can be significantly improved; however, this is out of the scope of the paper. For $\texttt{SR-Sinkhorn}(\nabla,L)$ operation used in \textbf{SinkGD}, we simply use 5 steps.
\end{itemize}

\section{Proofs}

\subsection{Proof of Lemma~\ref{lem:properties-proj}}

\begin{proof}
Let us assume that $\Vert\mathcal{P}_{g}(x)\Vert_2=c$ and so for any $x$. Then we have that:
\begin{align*}
    \Vert \Vert\mathcal{P}_{g}\circ \mathcal{P}_{g}(x)\Vert_2 \Vert \mathcal{P}_{g}(x) \Vert_2  &\geq g^*(\mathcal{P}_g(x)):=\langle \mathcal{P}_{g}\circ \mathcal{P}_{g}(x),\mathcal{P}_{g}(x)\rangle \\
    &\geq \sup_{z:~g(z)\leq 1} \langle z, \mathcal{P}_g(x)\rangle 
\end{align*}
where the first inequality follows from Cauchy–Schwarz and the second inequality follows from the definition of $\mathcal{P}_g$. Now recall by definition, that $g(\mathcal{P}_g(x))\leq 1 $, and therefore we can select $z=\mathcal{P}_g(x)$ in the right inequality which gives:
\begin{align*}
     \Vert\mathcal{P}_{g}\circ \mathcal{P}_{g}(x)\Vert_2 \Vert \mathcal{P}_{g}(x) \Vert_2  &\geq g^*(\mathcal{P}_g(x))\\
    &\geq \Vert \mathcal{P}_g(x)\Vert_2^2 
\end{align*}
However because $\Vert\mathcal{P}_{g}\circ \mathcal{P}_{g}(x)\Vert_2 = \Vert\mathcal{P}_{g}(x)\Vert_2 = c$, we obtain that 
\begin{align*}
    g^*(\mathcal{P}_g(x))=\Vert \mathcal{P}_g(x)\Vert_2^2
\end{align*}
and by optimality, we also deduce that $\mathcal{P}_g\circ \mathcal{P}_g(x)=\mathcal{P}_g(x)$.
\end{proof}

\subsection{Proof of Thoerem~\ref{thm:cvg}}

\begin{proof}
First observe that thanks to Lemma~\ref{lem:properties-proj}, we have for any $n\geq 1$:
\begin{align}
\label{eq-proof-lem}
    \Vert x_n\Vert_2^2 = g_1^{*}(x_{2n-1})=g_2^{*}(x_{2n}) = c^2
\end{align}
where $g_1^*$ and $g_2^*$ are the dual norms of $g_1$ and $g_2$ respectively. We also have that for $n\geq 1$
\begin{align}
\label{eq-proof-ineq}
g_2(x_{2n}) \leq 1,~g_1(x_{2n+1})\leq 1
\end{align}
by definition of the normalized projections. We even have $$g_2(x_{2n})=g_1(x_{2n+1})=1$$ by optimality of the normalized projections. Let assume now that $n\geq 2$ is even, then we have that:
\begin{align*}
    \langle x_{n+1}, x_n\rangle &= \langle \mathcal{P}_{g_1}(x_{n}), x_n\rangle = g_1^{*}(x_n)\\
    &\geq \langle z ,x_n\rangle~\forall~z\in\mathcal{B}_1(0_d)
\end{align*}
where $\mathcal{B}_{g_1}(0_d)$ is the unit ball centered in $0_d$ associated to the norm $g_1$ and the inequality follows from the definition of $\mathcal{P}_{g_1}$. In particular by taking $z=x_{n-1}=\mathcal{P}_{g_1}(x_{n-2})\in\mathcal{B}_{g_1}(0_d)$, we obtain that:
\begin{align*}
    \langle x_{n+1}, x_n\rangle \geq \langle x_{n-1} ,x_n\rangle
\end{align*}
A similar proof can be conducted when $n$ is odd using the definition of $\mathcal{P}_{g_2}$. Therefore the sequence $(\langle x_{n+1}, x_n\rangle)_{n\geq 1}$ is increasing and bounded so it converges to a certain constant $r>0$. From this result we directly deduces that:
\begin{itemize}
    \item $(g_1^{*}(x_{2n}))_{n\geq 1}$ is monotonic increasing and converges towards $r$.
    \item $(g_2^{*}(x_{2n+1}))_{n\geq 1}$ is monotonic increasing and converges towards $r$.
\end{itemize}

Because $(x_{2n+1})_{n\geq 0}$ and $(x_{2n})_{n\geq 0}$ are bounded, we can extract a common subsequence $(x_{2\phi(n)+1})_{n\geq 1}$ and $(x_{2\phi(n)})_{n\geq 1}$ that converge to  some cluster points $x_1$ and $x_2$ respectively. 

Now by continuity of the dual norms and of the inner product we obtain that:
\begin{align*}
    \lim_{n\to\infty} g_2^{*}(x_{2\phi(n)+1})=g_2^{*}(x_1)\\
    \lim_{n\to\infty} g_1^{*}(x_{2\phi(n)})=g_1^{*}(x_2)\\
    \lim_{n\to\infty}\langle x_{2\phi(n)+1}, x_{\phi(n)}\rangle = \langle x_1, x_2\rangle 
\end{align*}
However observe that these three sequences are subsequences of $(\langle x_n,x_{n+1}\rangle)_{n\geq 0}$ which converges towards $r$, therefore we obtain that:
\begin{align*}
    r = g_2^{*}(x_1) = g_1^{*}(x_2) = \langle x_1, x_2\rangle 
\end{align*}
Additionally, remark that 
\begin{align}
\label{eq-proof=1}
g_2^{*}(x_{2\phi(n)+1})=g_2^*(\mathcal{P}_{g_1}(x_{2\phi(n)}))
\end{align}

Let us now show that $x_{2\phi(n)+1}=\mathcal{P}_{g_1}(x_{2\phi(n)})\xrightarrow[n\to\infty]{}\mathcal{P}_{g_1}(x_2)$. Indeed we have that:
\begin{align*}
    \langle \mathcal{P}_{g_1}(x_{2\phi(n)}), x_{2\phi(n)}\rangle = g_1^{*}(x_{2\phi(n)})\xrightarrow[n\to\infty]{}g_1^{*}(x_2)
\end{align*}
Then, because $(\mathcal{P}_{g_1}(x_{2\phi(n)}))_{n\geq 0}$ is bounded, we can extract a subsequence that converges towards $z$ such that $g_1(z)\leq 1$, from which follows that:
\begin{align*}
    \langle z,  x_2\rangle = \langle \mathcal{P}_{g_1}(x_2),x_2\rangle
\end{align*}
then by optimality of $\mathcal{P}_{g_1}(x_2)$ over the unit ball induced by $g_1$, we deduce that $z= \mathcal{P}_{g_1}(x_2)$. This is true for all converging sub-sequences of $(\mathcal{P}_{g_1}(x_{2\phi(n)}))_{n\geq 0}$, therefore we have that $\mathcal{P}_{g_1}(x_{2\phi(n)})\xrightarrow[n\to\infty]{}\mathcal{P}_{g_1}(x_2)$, and by unicity of the limit, it follows that 
$$x_1 =\mathcal{P}_{g_1}(x_2)\; .$$
Now from the equality $g_2^{*}(x_1) =  \langle x_1, x_2\rangle $, and given the fact that $g_2(x_2)\leq 1$ (as for all $n$ $g_2(x_{2\phi(n)})\leq 1$ which is obtained from~\eqref{eq-proof-ineq}), we deduce that 
$$x_2 = \mathcal{P}_{g_2}(x_1)$$
thanks to the optimality of $\mathcal{P}_{g_2}$. Now observe now that:
\begin{align*}
    g_2^*(x_1) = g_2^*(\mathcal{P}_{g_1}(x_2)) &= \langle \mathcal{P}_{g_2}\circ  \mathcal{P}_{g_1}(x_2), \mathcal{P}_{g_1}(x_2)\rangle\\
    & = \langle x_2, \mathcal{P}_{g_2}(x_2)\rangle 
\end{align*}
where the equality follows from the fact that:
\begin{align*}
    \mathcal{P}_{g_2}\circ  \mathcal{P}_{g_1}(x_2) = \mathcal{P}_{g_2}(x_1) = x_2
\end{align*}
and the two equalities follows the previous results obtained.
Therefore we obtain that
\begin{align*}
    g_2^*(x_1)  = g_2^{*}(x_2)=c^2
\end{align*}
where the last equality follows from~\eqref{eq-proof-lem}. Thus, we obtain that
\begin{align*}
 r = c^2 = g_2^*(x_1) =\langle x_1,x_2\rangle \leq \Vert x_1\Vert_2 \Vert x_2\Vert_2 
\end{align*}
but from~\eqref{eq-proof-lem}, $\Vert x_1\Vert_2=\Vert x_2\Vert_2=c$, from which follows that $x_1=x_2=x$, and $\mathcal{P}_{g_1}(x)=\mathcal{P}_{g_2}(x)=x$. As a by-product, we also obtain that $\langle x_n, x_{n+1}\rangle \xrightarrow[n\to\infty]{} r=c^2$, and therefore $\Vert x_n - x_{n-1}\Vert_2^2 = 2c^2 - 2\langle x_n,x_{n+1}\rangle \xrightarrow[n\to\infty]{}0$.

From the above proof, we also conclude that if $y$ is a cluster point of $(x_n)_{n\geq 0}$, then there exists $\psi$ such that $(x_{\psi(n)})_{n\geq 0}$ converges towards $y$ that satisfies:
$\mathcal{P}_{g_1}(y)=\mathcal{P}_{g_2}(y)=y$. Indeed this follows simply from the fact that we can extract a subsequence of $(x_{\psi(n)})_{n\geq 0}$ which has all indices that are either even or odd.

Let us now show that 
\begin{align*}
    g_1(x_n)\xrightarrow[n\to\infty]{} 1,~\text{and}~
    g_2(x_n)\xrightarrow[n\to\infty]{} 1\; .
\end{align*}

Indeed for a convergent subsequence, if the subsequence has infinitely many odd indices the result is trivial from the fact $g_1(x_{2n+1})=1$. Now if the indices are even, we obtain that $g_1(x_{2\phi(n)})\xrightarrow[n\to\infty]{}g_1(x)$, however $x$ has to be a fixed-point so $g_1(x)=g_1(\mathcal{P}_{g_1}(x))=1$. This hold for any subsequences, therefore we have $g_1(x_n)\xrightarrow[n\to\infty]{}1$. Similarly, we can apply the same reasoning for $g_2(x_n)$.

Let us now show the following Lemma.

\begin{lemma}
\label{lem-set-view}
Let $g_1$ and $g_2$ two norms satisfying the same assumption as in Theorem~\ref{thm:cvg}, that is for all $x$, $\Vert \mathcal{P}_{g_1}(x) \Vert_2 = \Vert \mathcal{P}_{g_2}(x)\Vert_2 = c$ with $c>0$. Then by denoting $\mathcal{S}_{g}$ the unit sphere associated to a norm $g$, we have:
\begin{align*}
    \mathcal{S}_{g_1}\cap \mathcal{S}_{g_2}\cap\mathcal{S}_{c\ell_2} = \mathcal{F}~~\text{where  }\mathcal{F}:=\{x:~\mathcal{P}_{g_1}(x)=\mathcal{P}_{g_2}(x)=x\}\; .
\end{align*}
\end{lemma}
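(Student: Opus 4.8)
\textbf{Proof plan for Lemma~\ref{lem-set-view}.}

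The plan is to prove the two set inclusions $\mathcal{F}\subseteq \mathcal{S}_{g_1}\cap \mathcal{S}_{g_2}\cap\mathcal{S}_{c\ell_2}$ and $\mathcal{S}_{g_1}\cap \mathcal{S}_{g_2}\cap\mathcal{S}_{c\ell_2}\subseteq\mathcal{F}$ separately. The first inclusion is the easy direction: if $x\in\mathcal{F}$, then $x=\mathcal{P}_{g_1}(x)$, so $g_1(x)=g_1(\mathcal{P}_{g_1}(x))=1$ (the normalized projection always lands on the unit sphere of $g_1$), hence $x\in\mathcal{S}_{g_1}$; symmetrically $x\in\mathcal{S}_{g_2}$; and from Assumption~\ref{assump-norm}, $\Vert x\Vert_2=\Vert\mathcal{P}_{g_1}(x)\Vert_2=c$, so $x\in\mathcal{S}_{c\ell_2}$. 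That gives $x\in\mathcal{S}_{g_1}\cap \mathcal{S}_{g_2}\cap\mathcal{S}_{c\ell_2}$.

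For the reverse inclusion, I would take $x$ with $g_1(x)=g_2(x)=1$ and $\Vert x\Vert_2=c$, and show $\mathcal{P}_{g_1}(x)=x$ (the argument for $g_2$ is identical). The key is to use the characterization of $g_i^*$ from Lemma~\ref{lem:properties-proj}, namely $g_i^*(\mathcal{P}_{g_i}(x))=\Vert\mathcal{P}_{g_i}(x)\Vert_2^2=c^2$. I would compute $\langle x, \mathcal{P}_{g_1}(x)\rangle = g_1^*(x)$ by definition of the normalized projection (using $g_1(x)=1$, so $x$ is itself feasible in the $\argmax$; more precisely $g_1^*(x)\ge\langle x, \mathcal{P}_{g_1}(x)\rangle$ always, and $\langle x, \mathcal{P}_{g_1}(x)\rangle\ge\langle x, x\rangle = c^2$ since $x$ lies in the unit ball of $g_1$). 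Then I would bound $g_1^*(x)$ from above: by Cauchy–Schwarz, $g_1^*(x)=\sup_{g_1(z)\le 1}\langle x,z\rangle \le \sup_{g_1(z)\le 1}\Vert x\Vert_2\Vert z\Vert_2$; but for any such $z$, applying Cauchy–Schwarz to $\langle z,\mathcal{P}_{g_1}(z)/c\rangle\le g_1(z)g_1^*(\mathcal{P}_{g_1}(z)/c)$... actually the cleaner route is to observe that $\Vert z\Vert_2 \le c$ whenever $g_1(z)\le 1$ — this follows because $g_1^*$ dominates the $(1/c)\ell_2$ dual pairing: for any unit-$\ell_2$ vector $u$, $\langle z, u\rangle \le g_1(z)\, g_1^*(u)$, and one shows $g_1^*(u)\le c$ using Lemma~\ref{lem:properties-proj} applied to $u$ (where $\mathcal{P}_{g_1}(u)$ has $\ell_2$-norm $c$ and $g_1^*(\mathcal{P}_{g_1}(u))=c^2$, combined with homogeneity). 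Hence $g_1^*(x)\le \Vert x\Vert_2\cdot c = c^2$. Combining, $\langle x,\mathcal{P}_{g_1}(x)\rangle = c^2 = \Vert x\Vert_2\Vert\mathcal{P}_{g_1}(x)\Vert_2$, so equality in Cauchy–Schwarz forces $\mathcal{P}_{g_1}(x)$ to be a nonnegative multiple of $x$; matching $\ell_2$-norms ($\Vert\mathcal{P}_{g_1}(x)\Vert_2=c=\Vert x\Vert_2$) gives $\mathcal{P}_{g_1}(x)=x$.

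I expect the main obstacle to be the auxiliary claim that $g_i(z)\le 1 \implies \Vert z\Vert_2 \le c$, i.e. that the unit ball of $g_i$ is contained in the $\ell_2$-ball of radius $c$ — this is the crucial geometric fact that makes Cauchy–Schwarz tight. It has to be extracted from Assumption~\ref{assump-norm} via Lemma~\ref{lem:properties-proj}: the equality $g_i^*(\mathcal{P}_{g_i}(u))=\Vert\mathcal{P}_{g_i}(u)\Vert_2^2$ together with $\Vert\mathcal{P}_{g_i}(u)\Vert_2=c$ pins down $g_i^*$ on the image of $\mathcal{P}_{g_i}$, and since $\mathcal{P}_{g_i}$ is surjective onto $\mathcal{S}_{g_i}$ up to scaling, this controls $g_i^*$ everywhere by positive homogeneity, which dualizes back to the ball inclusion. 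Once that lemma is in hand, the rest is a short Cauchy–Schwarz equality-case argument applied twice (once for $g_1$, once for $g_2$).
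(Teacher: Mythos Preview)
Your overall strategy matches the paper's: prove both inclusions, with the nontrivial direction handled by sandwiching $\langle x,\mathcal{P}_{g_1}(x)\rangle$ between $c^2$ and $c^2$ and then invoking the equality case of Cauchy--Schwarz. The only difference is in how you obtain the upper bound. You route it through the auxiliary claim that the $g_1$-unit ball sits inside the $\ell_2$-ball of radius $c$, and flag this as ``the main obstacle.'' The paper sidesteps this entirely: since $\Vert x\Vert_2=c$ by hypothesis and $\Vert\mathcal{P}_{g_1}(x)\Vert_2=c$ by Assumption~\ref{assump-norm}, Cauchy--Schwarz applied directly to the pair $(x,\mathcal{P}_{g_1}(x))$ already gives $\langle x,\mathcal{P}_{g_1}(x)\rangle\le c^2$, with no need to control $\Vert z\Vert_2$ for arbitrary $z$ in the $g_1$-unit ball. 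So your ``main obstacle'' is an artifact of the detour, not of the problem.

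Your ball-inclusion claim happens to be true, but the sketched justification via ``$g_1^*(\mathcal{P}_{g_1}(u))=c^2$ combined with homogeneity'' is shaky: $\mathcal{P}_{g_1}(u)$ is not in general a scalar multiple of $u$, so homogeneity of $g_1^*$ does not transfer information from $\mathcal{P}_{g_1}(u)$ back to $u$. The clean way to get $g_1^*(u)\le c$ for $\Vert u\Vert_2=1$ is, once again, direct Cauchy--Schwarz: $g_1^*(u)=\langle u,\mathcal{P}_{g_1}(u)\rangle\le\Vert u\Vert_2\,\Vert\mathcal{P}_{g_1}(u)\Vert_2=c$. At that point you have reproduced the paper's one-line step inside a longer wrapper, so you may as well apply it directly to $x$ and drop the auxiliary lemma.
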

\begin{proof}
Indeed $\mathcal{F}\subset \mathcal{S}_{g_1}\cap \mathcal{S}_{g_2}\cap\mathcal{S}_{c\ell_2} $ follows directly from the definition of $\mathcal{P}_{g_1}$, $\mathcal{P}_{g_1}$, and from Assumption~\ref{assump-norm}. Now let $z\in  \mathcal{S}_{g_1}\cap \mathcal{S}_{g_2}\cap\mathcal{S}_{c\ell_2}$. Observe that
\begin{align*}
    c^2\geq \langle z, \mathcal{P}_{1}(z)\rangle = \sup_{q:~g_1(q)=1}\langle z, q\rangle 
\end{align*}
where the inequality follows from the assumption on $\mathcal{P}_{g_1}$ and from the definition of $z$. Then as $g_1(z)=1$, we deduce that:
\begin{align*}
     c^2\geq \langle z, \mathcal{P}_{1}(z)\rangle \geq \Vert z\Vert_2^2=c^2
\end{align*}
from which folows that $\mathcal{P}_{g_1}(z)=z$. Similarly we deduce that $\mathcal{P}_{g_2}(z)=z$, and thus we have $\mathcal{S}_{g_1}\cap \mathcal{S}_{g_2}\cap\mathcal{S}_{c\ell_2}\subset \mathcal{F} $.
\end{proof}

Now observe that $d(x_n,\mathcal{S}_{g_1})\xrightarrow[n\to \infty]{} 0 $, and $d(x_n,\mathcal{S}_{g_1})\xrightarrow[n\to \infty]{} 0 $. Additionally, from~\eqref{eq-proof=1}, we have $d(x_n,\mathcal{S}_{c\ell_2})=0$, therefore we have that $d(x,\mathcal{S}_{g_1}\cap \mathcal{S}_{g_2}\cap\mathcal{S}_{c\ell_2})\xrightarrow[n\to\infty]{}0$ since all these spaces are closed, and the result follows from  Lemma~\ref{lem-set-view}.  

\end{proof}

\section{On the Convex Relaxation of Problem~\eqref{eq:multi-norm-opt}}
\label{sec:convex-relaxation}
Given $K$ norms, $(g_1,\dots, g_K)$, in this section we are interested in solving:
\begin{align}
\label{eq:multi-norm-opt-convex-app}
 \argmax_{z} \langle \nabla, z\rangle\quad \text{s.t.}~\forall~~i\in [|1,K|],~~g_i(z)\leq 1
\end{align}
which as stated in the main paper is equivalent to solve
\begin{align*}
 \argmax_{z} \langle \nabla, z\rangle\quad \text{s.t.}~\Vert z\Vert\leq 1
\end{align*}
where
\begin{align}
\label{eq:def-gen-norm}
\Vert z\Vert := \max_{i\in[|1,K|]}g_i(z)\; .
\end{align}

Note that this constrained optimization problem is exactly finding the subdifferential of the dual norm $\Vert \cdot \Vert$. To see this, let us recall the following Lemma with its proof~\cite{watson1992characterization}.
\begin{lemma}
The subdifferential of a norm $\Vert \cdot \Vert$ at $x$ is given by
\begin{align*}
    \partial\Vert \cdot\Vert(x)=\{p\in\mathbb{R}^d:~\Vert p\Vert_*\leq 1\text{, } \langle p, x\rangle = \Vert x\Vert\}
\end{align*}
where the dual norm is defined as
\begin{align*}
    \Vert x\Vert_*:=\max_{\Vert z\Vert\leq 1}\langle z, x\rangle 
\end{align*}
\end{lemma}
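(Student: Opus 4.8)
The plan is to prove set equality by showing two inclusions, and the argument is essentially the standard one for norms. Write $C := \{p : \Vert p\Vert_* \leq 1,\ \langle p, x\rangle = \Vert x\Vert\}$, and recall that $p \in \partial\Vert\cdot\Vert(x)$ means $\Vert y\Vert \geq \Vert x\Vert + \langle p, y-x\rangle$ for all $y \in \mathbb{R}^d$.

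\emph{Step 1: $C \subseteq \partial\Vert\cdot\Vert(x)$.}
Take $p$ with $\Vert p\Vert_* \leq 1$ and $\langle p, x\rangle = \Vert x\Vert$. For any $y$, by definition of the dual norm we have $\langle p, y\rangle \leq \Vert p\Vert_* \Vert y\Vert \leq \Vert y\Vert$. Hence $\Vert y\Vert \geq \langle p, y\rangle = \langle p, x\rangle + \langle p, y - x\rangle = \Vert x\Vert + \langle p, y-x\rangle$, which is exactly the subgradient inequality. So $p \in \partial\Vert\cdot\Vert(x)$.

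\emph{Step 2: $\partial\Vert\cdot\Vert(x) \subseteq C$.}
Take $p \in \partial\Vert\cdot\Vert(x)$, so $\Vert y\Vert \geq \Vert x\Vert + \langle p, y-x\rangle$ for all $y$. First, plugging $y = 0$ gives $0 \geq \Vert x\Vert - \langle p, x\rangle$, and plugging $y = 2x$ gives $2\Vert x\Vert \geq \Vert x\Vert + \langle p, x\rangle$, i.e. $\Vert x\Vert \geq \langle p, x\rangle$; combining the two yields $\langle p, x\rangle = \Vert x\Vert$. Next, to bound $\Vert p\Vert_*$, use positive homogeneity: for any $t > 0$ and any direction $z$, plug $y = x + tz$ to get $\Vert x + tz\Vert \geq \Vert x\Vert + t\langle p, z\rangle$, so $\langle p, z\rangle \leq \tfrac{1}{t}(\Vert x + tz\Vert - \Vert x\Vert) \leq \tfrac{1}{t}(t\Vert z\Vert) = \Vert z\Vert$ by the triangle inequality; taking the supremum over $\Vert z\Vert \leq 1$ gives $\Vert p\Vert_* \leq 1$. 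Hence $p \in C$.

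\emph{Remaining obstacle.}
There is no real obstacle here — the lemma is classical and both inclusions are short. The only thing to be careful about is the direction of duality being used consistently with how the paper defined $\Vert\cdot\Vert_*$ via $\Vert x\Vert_* := \max_{\Vert z\Vert \leq 1}\langle z, x\rangle$, and then connecting this characterization back to problem~\eqref{eq:multi-norm-opt-convex-app}: since $\Vert\cdot\Vert$ from~\eqref{eq:def-gen-norm} is a genuine norm (a finite max of norms), its dual norm $\Vert\nabla\Vert_* = \max_{\Vert z\Vert\leq 1}\langle\nabla,z\rangle$ is attained, and any maximizer $z^\star$ is, after noting $\langle\nabla, z^\star\rangle = \Vert\nabla\Vert_*$ forces $z^\star \in \partial\Vert\cdot\Vert_*(\nabla)$ by the lemma applied to the dual norm (using reflexivity in finite dimensions, $\Vert\cdot\Vert_{**} = \Vert\cdot\Vert$). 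One should also spell out the closed-form projections $\mathcal{P}_{g_i}$ for $\ell_p$ and Schatten-$p$ norms and then describe how Chambolle–Pock is instantiated on the saddle-point reformulation $\max_z \min_{\lambda \geq 0}\langle\nabla,z\rangle - \sum_i \lambda_i(g_i(z)-1)$, but that is routine bookkeeping rather than a conceptual difficulty.
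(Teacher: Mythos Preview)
Your proof is correct. Step~1 (the inclusion $C\subseteq\partial\Vert\cdot\Vert(x)$) is verbatim the paper's argument. Step~2 differs: the paper rewrites the subgradient inequality as $\langle p,x\rangle-\Vert x\Vert\geq\sup_y\bigl(\langle p,y\rangle-\Vert y\Vert\bigr)$, recognizes the right-hand side as the Fenchel conjugate of the norm, and then invokes Lemma~\ref{lemma-fenchel-norm} (the conjugate of a norm is the indicator of the dual unit ball) to conclude $\Vert p\Vert_*\leq 1$ and hence $\langle p,x\rangle=\Vert x\Vert$. Your route is more elementary: you substitute $y=0$ and $y=2x$ to pin down $\langle p,x\rangle=\Vert x\Vert$, and then $y=x+tz$ together with the triangle inequality to get $\langle p,z\rangle\leq\Vert z\Vert$ directly. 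The paper's approach has the advantage of tying into the Fenchel--Legendre machinery already set up in the appendix (and reusing Lemma~\ref{lemma-fenchel-norm}); yours is self-contained and avoids that dependency. Either is fine for this classical fact. The extra material in your ``Remaining obstacle'' paragraph about Chambolle--Pock and the saddle-point reformulation is not part of the lemma's proof and can be dropped here.
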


\begin{proof}
We can show this result by double inclusion. Let us define the subdifferential of a norm as
\begin{align*}
    \partial\Vert \cdot\Vert(x):=\{p:~\Vert y\Vert \geq \Vert x \Vert + \langle p, y-x\rangle~\forall y\}
\end{align*}
and let us denote our set of interest as
\begin{align*}
    \mathcal{V}(x):=\{p\in\mathbb{R}^d:~\Vert p\Vert_*\leq 1\text{, } \langle p, x\rangle = \Vert x\Vert\}
\end{align*}

Let $p\in\mathcal{V}(x)$. Then we have
\begin{align*}
    \Vert x \Vert + \langle p, y-x\rangle &= \langle p, y\rangle \\
    &\leq \Vert p\Vert_* \Vert y\Vert\\
    &\leq \Vert y\Vert
\end{align*}
where the first equality comes from the definition of $p$, the first inequality comes from Holder, and the last one is obtained by definition of $p$. So we deduce that $\mathcal{V}(x)\subset  \partial\Vert \cdot\Vert(x)$.
Let us now take $p\in \partial\Vert \cdot\Vert(x)$, that is $p$ such that for all $y$ 
$\Vert y\Vert \geq \Vert x \Vert + \langle p, y-x\rangle$. Then we have for all $y$ that:
\begin{align*}
    \langle p,x\rangle - \Vert x\Vert &\geq \langle p,y\rangle 
 - \Vert y\Vert\\
 & \geq \sup_y\langle p,y\rangle 
 - \Vert y\Vert\\
 &\geq \Vert p\Vert^{*}
\end{align*}
where $\Vert \cdot\Vert^{*}$ is the Fenchel-Legendre transform of the norm $\Vert\cdot \Vert$. From Lemma~\ref{lemma-fenchel-norm}, we deduce that
\begin{align*}
    \langle p,x\rangle - \Vert x\Vert \geq \bm{1}_{\mathcal{B}_{1}}(p)
\end{align*}
where $\mathcal{B}_{1}$ is the unit ball associated with the dual norm $\Vert \cdot\Vert_{*}$. As the left-hand side is finite, we deduce that $p\in\mathcal{B}_1$. Then we deduce that 
\begin{align*}
   \Vert x\Vert \geq  \langle p, x\rangle \geq \Vert x\Vert
\end{align*}
where the left inequality is obtained by applying Holder to the inner-product, from which follows the result.

\end{proof}

For simple norms, such as the $\ell_p$-norm with $1< p\leq +\infty$, obtaining an element of $\partial\Vert \cdot\Vert_{*}(\nabla)$ can be done in closed-form. For $\ell_p$ norms, recall the their dual norm are the $\ell_q$ norms with $q$ the dual exponent respectively. The following Lemma provide analytic formulas for such norms. 

\begin{lemma}
For $x\in\mathbb{R}^d$, let us define the $\ell_p$-norm as 
$$\Vert x\Vert_p:=\left(\sum_{i=1}^d |x_i|^p\right)^{1/p}$$

\textbf{Case 1: \boldmath{$1 < p < \infty$}.}
Let us define the dual exponent $q$ by
\[
\frac{1}{p} + \frac{1}{q} = 1.
\]
Then the subdifferential of $\|x\|_p$ is
\[
\partial \|x\|_p \;=\;
\begin{cases}
\displaystyle
\left\{
  \frac{\bigl(|x_1|^{p-2}x_1,\dots,|x_d|^{p-2}x_d\bigr)}{\|x\|_p^{\,p-1}}
\right\}, 
& x \neq 0,
\\
\bigl\{\,g \in \mathbb{R}^d : \|g\|_q \,\le\, 1 \bigr\}, 
& x = 0.
\end{cases}
\]

\textbf{Case 2: \boldmath{$p = 1$}.}
For the $\ell_1$-norm, the subdifferential at $x \in \mathbb{R}^n$ is given by
\[
\partial \|x\|_1 
\;=\;
\Bigl\{
   g \in \mathbb{R}^n : 
   g_i = \mathrm{sign}(x_i)
\Bigr\}.
\]
Here, $\mathrm{sign}(x_i)$ is $+1$ if $x_i > 0$, $-1$ if $x_i < 0$, and can be any value in $[-1,1]$ if $x_i=0$.

\textbf{Case 3: \boldmath{$p = \infty$}.}  
For the $\ell^\infty$-norm, let $M = \|x\|_\infty$ and let us define
\begin{align*}
\mathcal{S}(x):=\{
  g: 
  g_i=0 \text{ if } |x_i|<M,\,
  g_i = \mathrm{sign}(x_i)~\text{else},\,
  \|g\|_1 = 1
\}
\end{align*}

Then by denoting for any set $A\subset\mathbb{R}^d$, $\text{conv}(A)$ the convex hull of the set $A$, we have:
\[
\partial \|x\|_\infty
\;=\;
\begin{cases}
\displaystyle
\mathrm{conv}\!(\mathcal{S}(x))
& x \neq 0,
\\
\bigl\{\, g \in \mathbb{R}^n : \|g\|_1 \le 1 \bigr\},
& x = 0.
\end{cases}
\]
\end{lemma}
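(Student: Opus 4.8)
The plan is to reduce everything to the characterization of the subdifferential of a norm established just above: since the dual of $\Vert\cdot\Vert_p$ is $\Vert\cdot\Vert_q$ (with $q$ the Hölder conjugate of $p$, using the conventions $q=\infty$ when $p=1$ and $q=1$ when $p=\infty$), that lemma gives
\begin{align*}
\partial\Vert\cdot\Vert_p(x) = \bigl\{g\in\mathbb{R}^d:~\Vert g\Vert_q\le 1,~\langle g,x\rangle = \Vert x\Vert_p\bigr\}\; .
\end{align*}
It then only remains to make this set explicit in each of the three regimes by examining exactly when Hölder's inequality $\langle g,x\rangle \le \Vert g\Vert_q\Vert x\Vert_p$ is tight.

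First I would dispose of the case $x=0$, which is uniform across the three cases: then $\Vert x\Vert_p=0$ and the constraint $\langle g,x\rangle=\Vert x\Vert_p$ holds for every $g$, so the subdifferential is the whole dual unit ball $\{g:~\Vert g\Vert_q\le 1\}$, which is precisely what each of the three formulas states at $x=0$ (for $p=1$ this is read off from the sign convention $\operatorname{sign}(0)\in[-1,1]$). For $x\neq 0$, I note that any admissible $g$ satisfies $0<\Vert x\Vert_p=\langle g,x\rangle\le\Vert g\Vert_q\Vert x\Vert_p$, which forces both $\Vert g\Vert_q=1$ and equality in Hölder's inequality; the problem thus becomes the identification of the Hölder equality set intersected with the dual unit sphere.

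When $1<p<\infty$ (so $1<q<\infty$), strict convexity of $t\mapsto t^p$ makes the equality condition rigid: it forces $g_ix_i\ge 0$ for every $i$ together with proportionality of $(\lvert g_i\rvert^q)_i$ and $(\lvert x_i\rvert^p)_i$. Writing $\lvert g_i\rvert=\lambda\lvert x_i\rvert^{p/q}=\lambda\lvert x_i\rvert^{p-1}$ and matching signs yields $g_i=\lambda\,\lvert x_i\rvert^{p-2}x_i$, and imposing $\Vert g\Vert_q=1$ (using $(p-1)q=p$) pins down $\lambda=\Vert x\Vert_p^{-(p-1)}$, which is the claimed singleton. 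When $p=1$ ($q=\infty$), equality in $\langle g,x\rangle\le\Vert g\Vert_\infty\Vert x\Vert_1$ with $\Vert g\Vert_\infty=1$ forces $g_ix_i=\lvert x_i\rvert$ for each $i$, i.e.\ $g_i=\operatorname{sign}(x_i)$ whenever $x_i\neq 0$ while the remaining coordinates are constrained only by $\lvert g_i\rvert\le 1$ --- exactly the stated set. When $p=\infty$ ($q=1$), set $M=\Vert x\Vert_\infty$; equality in $\langle g,x\rangle\le\Vert g\Vert_1 M$ with $\Vert g\Vert_1=1$ forces $g_ix_i=\lvert g_i\rvert M$ for each $i$, hence $g_i=0$ whenever $\lvert x_i\rvert<M$ and $\operatorname{sign}(g_i)=\operatorname{sign}(x_i)$ on the support of $g$, so $\partial\Vert x\Vert_\infty$ is the set of $\ell_1$-unit vectors supported on the maximizing coordinates with signs matching those of $x$. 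A two-inclusion argument identifies this with $\operatorname{conv}(\mathcal{S}(x))$: any such $g$ equals $\sum_j\lvert g_j\rvert\,\bigl(\operatorname{sign}(x_j)e_j\bigr)$, a convex combination of elements of $\mathcal{S}(x)$, and conversely each convex combination of the signed basis vectors that constitute $\mathcal{S}(x)$ has this form.

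All the computations are routine; the only steps that need genuine care are stating the Hölder equality conditions correctly in the two boundary regimes $p\in\{1,\infty\}$ --- where one factor is an $\ell_\infty$- or $\ell_1$-norm and the proportionality condition degenerates into a coordinatewise saturation condition --- and, for $p=\infty$, verifying both inclusions in the identification of the equality set with $\operatorname{conv}(\mathcal{S}(x))$. I expect this last identification to be the only step carrying any real content.
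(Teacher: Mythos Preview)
The paper states this lemma without proof, treating it as a standard fact about $\ell_p$ subdifferentials, so there is no argument to compare against. Your proposal is correct and is the natural route: invoke the preceding characterization $\partial\Vert\cdot\Vert_p(x)=\{g:\Vert g\Vert_q\le 1,\ \langle g,x\rangle=\Vert x\Vert_p\}$ and then identify this set by analyzing the equality case of H\"older's inequality in each regime. The only point worth flagging is that the paper's definition of $\mathcal{S}(x)$ is written somewhat ambiguously; your reading of it as the set of signed coordinate vectors $\operatorname{sign}(x_j)e_j$ supported on the maximizing indices is the one that makes the convex-hull statement true, and your two-inclusion argument for that identification is fine.
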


However, for general norms, there are not known closed-form solutions of their associated subdifferentials. In particular, if the norm is defined as in~\eqref{eq:def-gen-norm}, even when the $g_i$'s are simple norms (i.e. norms for which we can compute the subdifferential of their dual norms), then no closed-form solution can be obtained in general.

\subsection{A Dual Perspective}

In this section, we propose an algorithmic approach to solve the convex relaxation of the problem introduced in~\eqref{eq:multi-norm-opt}. More formally, given a family of simple norms $(g_i)_{i=1}^K$ and some positive constants $(\varepsilon_i)_{i=1}^K$, we consider the following problem:
\begin{align}
\label{eq:proj-gen-opt}
 \max_{d\theta\in\mathbb{R}^d} \langle \nabla, d\theta\rangle\quad \text{s.t.}~g(d\theta)\leq 1\; .
\end{align}
where
\begin{align*}
g(x):= \max_{i\in[|1,K|]}\frac{g_i(x)}{\varepsilon_i}
\end{align*}
which is also a norm. For such problems, as long as $\nabla\neq 0$, then the solutions lies in the level set $\{d\theta:~g(d\theta)=1\}$. Even if the subdifferentials of (the dual norm of) each $g_i$ can be derived in closed form, there is not known closed-form for the subdifferential of (the dual norm of) $g$. To solve~\eqref{eq:proj-gen-opt}, we propose to consider a coordinate gradient descent on the dual. A simple application of the Fenchel duality~\cite{rockafellar1974conjugate} leads to the following equivalent optimization problem:
\begin{align}
\label{dual-gen-proj}
\inf_{\lambda_1,\dots,\lambda_K}\sum_{i=1}^K \epsilon_i g^\dagger_i(\lambda_i)\quad\text{s.t.}\quad \nabla\mathcal{L}(\theta) = \sum_{i=1}^K \lambda_i
\end{align}
where $g_i^{\dagger}$ is the dual norm of $g_i$ and so for all $i\in[|1,K|]$, from which a primal solution can be recovered by simply finding $y_i$ s.t. $\lambda_i>0$ and such that $\langle \lambda_i, y_i\rangle=\varepsilon_ig_i^{\dagger}(y_i)$ under the condition that $g_i(y_i)=\varepsilon_i$, which is equivalent to solve:
\begin{align*}
    y_i^{*}:=\varepsilon_i\argmax_{z:~g_i(z)\leq 1} \langle z,\lambda_i\rangle \; .
\end{align*}

\begin{proof}
Let $(\mathcal{B}_i(\epsilon_i))_{i=1}^K$ the ball associated with the norm $(g_i)_{i=1}^K$ with radius $(\varepsilon_i)_{i=1}^K$ respectively. Let us also denote for any set $\mathcal{A}\subset\mathbb{R}^d$, the indicator function as 
$$\bm{1}_{A}(x)=\begin{cases} 
          0 ~~\text{if}~~x\in A\\
          +\infty ~~\text{otherwise}
       \end{cases}$$

In the following we denote $f(x):=\langle x, \nabla\rangle$. Then~\eqref{eq:proj-gen-opt} can be reformulated as the following optimization problem:
\begin{align*}
    -\inf_{d\theta} f(d\theta) +\sum_{i=1}^K \bm{1}_{\mathcal{B}_i(\epsilon_i)}(
d\theta)
\end{align*}
which can be again reparameterized (up to the sign) as 
\begin{align*}
    \inf_{x=y_i, ~ \forall i\in[|1,K|]} f(x) +\sum_{i=1}^K \bm{1}_{\mathcal{B}_i(\epsilon_i)}(y_i)
\end{align*}
Now the Lagrangian associated with this problem is:
\begin{align*}
    &\mathcal{F}((\lambda_i)_{i=1}^{K}, (y_i)_{i=1}^K, x):= \\
    & f(x) - \langle x,\sum_{i=1}^K\lambda_i\rangle  +\sum_{i=1}^K \bm{1}_{\mathcal{B}_i(\epsilon_i)}(y_i) + \langle y_i, \lambda_i\rangle 
\end{align*}

And taking the infimum of the Lagrangian w.r.t the primal variables leads to the following optimization problem:
\begin{align*}
    \inf_{x} f(x) - \langle x,\sum_{i=1}^K\lambda_i\rangle + \sum_{i=1}^K \inf_{y_i} \bm{1}_{\mathcal{B}_i(\epsilon_i)}(y_i) + \langle y_i, \lambda_i\rangle 
\end{align*}

Now observe that
\begin{align*}
    \inf_{x} f(x) - \langle x,\sum_{i=1}^K\lambda_i\rangle &= -\sup_{x}\langle x,\sum_{i=1}^K\lambda_i\rangle - f(x)\\
    &=-f^*(\sum_{i=1}^K\lambda_i)
\end{align*}
where $f^*$ is the Fenchel-Legendre transform of $f$. Similarly, we have:
\begin{align*}
     \inf_{y_i} \bm{1}_{\mathcal{B}_i(\epsilon_i)}(y_i) + \langle y_i, \lambda_i\rangle &= -\sup_{y_i}\langle y_i,-\lambda_i\rangle - \bm{1}_{\mathcal{B}_i(\epsilon_i)}(y_i)\\
    &=-\bm{1}_{\mathcal{B}_i(\epsilon_i)}^*(-\lambda_i)
\end{align*}
Finally the dual of the problem is:
\begin{align*}
   \sup_{\lambda_1,\dots,\lambda_K} -f^*(\sum_{i=1}^K\lambda_i) -\sum_{i=1}^K \bm{1}_{\mathcal{B}_i(\epsilon_i)}^*(-\lambda_i)
\end{align*}
Now recall that $f(x):=\langle x, \nabla\rangle$, therefore we have that
\begin{align*}
    f^*(x)=\bm{1}_{\{\nabla\}}(x)
\end{align*}
Also, we have that
\begin{align*}
\bm{1}_{\mathcal{B}_i(\epsilon_i)}^*(x)=\varepsilon_i g_i^\dagger(x)
\end{align*}
where $g_i^\dagger$ is the dual norm of $g_i$, from which it follows the final dual formulation:
\begin{align*}
\inf_{\lambda_1,\dots,\lambda_K}\sum_{i=1}^K \epsilon_i g^\dagger_i(\lambda_i)\quad\text{s.t.}\quad \nabla\mathcal{L}(\theta) = \sum_{i=1}^K \lambda_i.
\end{align*}
Finally, Slater condition are verified, thus strong duality holds, and the KKT conditions gives the following primal-dual conditions:
\begin{align*}
    \begin{cases} 
          \nabla\mathcal{L}(\theta)=\sum_{i=1}^K \lambda_i\\
          \lambda_i\in\partial\bm{1}_{\mathcal{B}_i(\varepsilon_i)}(y_i)~~\forall i\\
          x=y_i~~\forall i
       \end{cases}
\end{align*}
Now according to Lemma~\ref{lemma-subdiff-indicator}, we have that

\begin{align*}
\partial\bm{1}_{\mathcal{B}_i(\varepsilon_i)}(x)=\begin{cases} 
          \{0\}\text{ if } g_i(x)< \varepsilon_i\\
          \emptyset \text{ if } g_i(x)> \varepsilon_i\\
          \{p:~\langle p, x\rangle =\varepsilon_ig_i^{\dagger}(p)\} \text{ if } g_i(x)=\varepsilon_i
       \end{cases}
\end{align*}
from which follows that one can recover a primal solution by simply finding $y_i$ s.t. $\lambda_i>0$ and such that $\langle \lambda_i, y_i\rangle=\varepsilon_ig_i^{\dagger}(y_i)$ under the condition that $g_i(y_i)=\varepsilon_i$, which is equivalent to solve:
\begin{align*}
    y_i^{*}:=\varepsilon_i\argmax_{z:~g_i(z)\leq 1} \langle z,\lambda_i\rangle \; .
\end{align*}
\end{proof}

\begin{algorithm}[tb]
   \caption{Primal-Dual Algorithm to solve~\eqref{update-cgd}}
   \label{alg:primal-dual}
\begin{algorithmic}
   \STATE {\bfseries Input:} $\beta_k\in\mathbb{R}^d$,   $\epsilon_1,\epsilon_k>0$, $\eta_1, \eta_2>0$, s.t. $\eta_1\eta_2<1$.
   \STATE Initialize $\lambda=z=u=0_d$.
   \FOR{$i=1$  {\bfseries to} $L$}
   \STATE $\lambda_{\text{old}} \gets \lambda$
    \STATE $z\gets \text{proj}_{\mathcal{B}_{1}(\epsilon_1)}(z + \eta_1 (u - \beta_k)) $
    \STATE $\lambda\gets \text{prox}_{\eta_2\epsilon_k g_k^{\dagger}}(\lambda - \eta_2 z)$
    \STATE $u\gets 2\lambda - \lambda_{\text{old}}$
   \ENDFOR
   \STATE Return $\lambda$
\end{algorithmic}
\end{algorithm}

To solve the dual problem introduced in~\eqref{dual-gen-proj}, we apply a coordinate gradient descent on the $\lambda_i$. More precisely, we can reformulate the problem as an unconstrained optimization one by considering:
\begin{align*}
\inf_{\lambda_2,\dots,\lambda_K} \epsilon_1 g_1^{\dagger}\left(\nabla\mathcal{L}(\theta) - \sum_{i=2}^K\lambda_i\right) +\sum_{i=2}^K \epsilon_i g^\dagger_i(\lambda_i)
\end{align*}
Starting with $\lambda_2^{(0)}=\dots=\lambda_K^{(0)}=0_d$, we propose to apply the following updates at time $t\geq 0$ and so for all $k\in[|2,K|]$:
\begin{align}
\label{update-cgd}
\lambda_{k}^{(t+1)}=\argmin_{\lambda_k} \epsilon_1 g_1^{\dagger}\left(\beta_k^{(t)}- \lambda_k \right) + \epsilon_k g^\dagger_k(\lambda_k)
\end{align}
where $\beta_k^{(t)}:=\nabla\mathcal{L}(\theta) - \sum\limits_{i\neq k}\lambda_i^{(t)}$. In order to solve~\eqref{update-cgd}, we leverage the so-called Chambolle-Pock algorithm~\cite{chambolle2011first}. Let us denote $h_1(\lambda):=\varepsilon_1 g_1^{\dagger}(\beta_k^{(t)}-\lambda)$ and $h_k(\lambda):=\varepsilon_k g_k^{\dagger}(\lambda)$. Then we can write
\begin{align*}
    \inf_{\lambda} h_1(\lambda) + h_k(\lambda) = \inf_{\lambda} h_k(\lambda) + \sup_{z} \langle z, \lambda\rangle - h_1^*(z)\\
    = \inf_{\lambda}\sup_{z} \langle z,\lambda\rangle - h_1^{*}(z) + h_k(\lambda)
\end{align*}
where $h_1^*$ is the Fenchel-Legendre transform of $h_1$ given by $h_1^{*}(x)=\langle x, \beta_k^{(t)}\rangle + \bm{1}_{\mathcal{B}_{1}(\epsilon_1)}(x)$
where $\mathcal{B}_{1}(\epsilon_1)$ is the ball induced by the norm $g_1$ of radius $\varepsilon_1$. We are now ready to present the Chambolle-Pock algorithm for our setting as presented in Algorithm~\ref{alg:primal-dual}. This algorithm requires to have access to the projection operation w.r.t the norm $g_1$ and the proximal operator w.r.t the norm $g_k^{\dagger}$, that is, it requires to have access to:
\begin{align*}
    \text{proj}_{\mathcal{B}_1(\varepsilon_1)}(x)&:=\argmin_{z:~g_1(z)\leq \varepsilon_1}\Vert z - x\Vert_2\\
    \text{prox}_{\lambda g_k^{\dagger}}(x)&:=\argmin_{z}\frac{\Vert z - x\Vert_2^2}{2} + \lambda g_k^{\dagger}(z)
    \end{align*}

Computing proximal and projection operators of norms and their duals can also be done using the Moreau decomposition property which states that:
\begin{align*}
    \text{prox}_{f}(x) + \text{prox}_{f^*}(x) = x
\end{align*}
in particular if $f:=\Vert \cdot \Vert$ is a norm, we have:
\begin{align*}
    \text{prox}_{\Vert \cdot \Vert}(x) + \text{proj}_{\mathcal{B}_{*}(1)}(x) = x
\end{align*}
where $\mathcal{B}_{*}(1)$ is the unit ball of the dual norm of $\Vert \cdot \Vert$. Finally, the full coordinate gradient scheme is presented in Algorithm~\ref{alg:cd-relax} which returns a solution of the primal problem defined in~\eqref{eq:multi-norm-opt-convex-app}.

\begin{algorithm}[tb]
   \caption{Coordinate Gradient Descent to solve~\eqref{dual-gen-proj}}
   \label{alg:cd-relax}
\begin{algorithmic}
   \STATE {\bfseries Input:} the gradient$\nabla\mathcal{L}(\theta)$ and  $\epsilon_1,\dots,\epsilon_K>0$
   \STATE Initialize $\lambda_2=\dots=\lambda_K=0_d$.
   \FOR{$t=1$  {\bfseries to} $T$}
   \FOR{$k=2$ {\bfseries to} $K$}
   \STATE $\beta_k^{(t)}\gets\nabla\mathcal{L}(\theta) - \sum\limits_{i\neq k}\lambda_i^{(t)}$
   \STATE $\lambda_k^{(t+1)} \gets \argmin_{\lambda} h_1(\lambda) + h_k(\lambda)$~~with Alg.~\ref{alg:primal-dual}
   \ENDFOR
    \ENDFOR
    \STATE Find $k$ such that $\lambda_k>0$
    \STATE Return $x^*:=\varepsilon_k\argmax\limits_{z:~g_k(z)\leq 1} \langle z,\lambda_k\rangle$
\end{algorithmic}
\end{algorithm}

\begin{lemma}
\label{lemma-fenchel-norm}
Let $\Vert\cdot\Vert$ be a norm on $\mathbb{R}^d$ with dual norm $\Vert x\Vert_{*}:=max_{z:\Vert z\Vert\leq 1} \langle z, x\rangle$, then the Fenchel-Legendre transform of $\Vert\cdot\Vert$ is the indicator function of the unit ball induced by its dual norm. More formally, we have
\begin{align*}
    \sup_{z\in\mathbb{R}^d}\langle z, x\rangle - \Vert z\Vert =  \begin{cases} 
          0 ~~\text{if}~~ \Vert x\Vert_* \leq 1\\
          +\infty ~~\text{otherwise}
       \end{cases}
\end{align*}
\end{lemma}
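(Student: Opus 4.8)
The plan is to split into the two cases appearing in the statement according to whether $\Vert x\Vert_{*}\le 1$ or $\Vert x\Vert_{*}>1$. The only tool needed is the generalized Cauchy--Schwarz (Hölder) inequality $\langle z, x\rangle \le \Vert z\Vert\,\Vert x\Vert_{*}$ for all $z\in\mathbb{R}^d$, which follows directly from the definition $\Vert x\Vert_{*}=\sup_{\Vert w\Vert\le 1}\langle w,x\rangle$ by rescaling $z$ to the unit sphere when $z\neq 0$ (and trivially when $z=0$).

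For the case $\Vert x\Vert_{*}\le 1$, I would combine this inequality with the hypothesis to get, for every $z$,
\begin{align*}
\langle z, x\rangle - \Vert z\Vert \;\le\; \Vert z\Vert\bigl(\Vert x\Vert_{*}-1\bigr)\;\le\; 0,
\end{align*}
so the supremum is at most $0$; since the value $0$ is attained at $z=0$, the supremum equals $0$.

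For the case $\Vert x\Vert_{*}>1$, I would use that the supremum defining $\Vert x\Vert_{*}$ is attained over the (compact) unit ball, producing $z_0$ with $\Vert z_0\Vert\le 1$ and $\langle z_0,x\rangle=\Vert x\Vert_{*}>1$. Evaluating the objective along the ray $z=t z_0$ for $t>0$ gives
\begin{align*}
\langle t z_0, x\rangle - \Vert t z_0\Vert \;=\; t\bigl(\langle z_0,x\rangle - \Vert z_0\Vert\bigr)\;\ge\; t\bigl(\Vert x\Vert_{*}-1\bigr)\;\xrightarrow[t\to\infty]{}\;+\infty,
\end{align*}
so the supremum is $+\infty$, as claimed.

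The only mild subtlety — not really an obstacle — is justifying attainment of the dual-norm supremum; this is immediate from compactness of the unit ball in finite dimension together with continuity of $z\mapsto\langle z,x\rangle$. If one prefers to avoid attainment, one can instead fix $\varepsilon>0$ with $\Vert x\Vert_{*}-\varepsilon>1$, pick $z_\varepsilon$ with $\Vert z_\varepsilon\Vert\le 1$ and $\langle z_\varepsilon,x\rangle\ge\Vert x\Vert_{*}-\varepsilon>1$, and send $t\to\infty$ along $t z_\varepsilon$; the same divergence conclusion follows.
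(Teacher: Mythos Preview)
Your proof is correct and complete. It differs from the paper's argument, though: the paper does not split cases. Instead, it writes $\Vert z\Vert = \sup_{\Vert y\Vert_{*}\le 1}\langle y,z\rangle$, so that
\[
\sup_{z}\;\langle z,x\rangle - \Vert z\Vert
= \sup_{z}\inf_{\Vert y\Vert_{*}\le 1}\langle z,x-y\rangle,
\]
then swaps the $\sup$ and $\inf$ via Sion's minimax theorem and evaluates the inner supremum, which is $0$ if $y=x$ and $+\infty$ otherwise, yielding the indicator of $\{\Vert x\Vert_{*}\le 1\}$ directly. Your approach is more elementary---it uses only H\"older's inequality and a one-line ray argument, and avoids invoking a minimax theorem---while the paper's approach is a single duality computation that treats both cases uniformly. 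Both are standard and equally valid here.
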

\begin{proof}
Using the fact that $\Vert x\Vert=\sup_{z:\Vert z\Vert_{*}\leq 1} \langle z,x\rangle$, we have:
\begin{align*}
   \sup_{z\in\mathbb{R}^d}\langle z, x\rangle - \Vert z\Vert & = \max_{z\in\mathbb{R}^d}\langle z, x\rangle  - \sup_{y:\Vert y\Vert_{*}\leq 1} \langle y,z\rangle\\
   &= \sup_{z\in\mathbb{R}^d}\inf_{y:\Vert y\Vert_{*}\leq 1} \langle z, x-y\rangle \\
   &=\inf_{y:\Vert y\Vert_{*}\leq 1}\sup_{z\in\mathbb{R}^d}\langle z, x-y\rangle\\
   &=\inf_{y:\Vert y\Vert_{*}\leq 1} \begin{cases} 
          0 ~~\text{if}~~y=x\\
          +\infty ~~\text{otherwise}
       \end{cases}
\end{align*}
which gives the desired result. Note that the third equality follows from Sion's minimax theorem. 
\end{proof}

\begin{lemma}
\label{lemma-subdiff-indicator}
Let $\Vert \cdot\Vert$ a norm on $\mathbb{R}^d$ and $\varepsilon>0$. Then we have:
\begin{align*}
\partial\bm{1}_{\mathcal{B}(\varepsilon)}(x)=\begin{cases} 
          \{0\}\text{ if } \Vert x\Vert< \varepsilon\\
          \emptyset \text{ if } \Vert x\Vert> \varepsilon\\
          \{p:~\langle p, x\rangle =\varepsilon\Vert p\Vert_*\} \text{ if } \Vert x\Vert=\varepsilon
       \end{cases}
\end{align*}
where $\Vert \cdot \Vert_*$ is the dual norm of $\Vert \cdot \Vert$, and $\mathcal{B}(\varepsilon)$ is the ball of radius $\varepsilon$ w.r.t the norm $\Vert \cdot \Vert$.
\end{lemma}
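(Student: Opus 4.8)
The plan is to read $\partial\bm{1}_{\mathcal{B}(\varepsilon)}(x)$ as the normal cone of the convex set $\mathcal{B}(\varepsilon)$ at $x$ and to split the analysis according to the position of $x$ relative to the ball. Unwinding the definition of the subdifferential, $p\in\partial\bm{1}_{\mathcal{B}(\varepsilon)}(x)$ means
\[
\bm{1}_{\mathcal{B}(\varepsilon)}(y)\ \ge\ \bm{1}_{\mathcal{B}(\varepsilon)}(x)+\langle p,y-x\rangle\qquad\text{for all }y\in\mathbb{R}^d .
\]
If $\Vert x\Vert>\varepsilon$, then $\bm{1}_{\mathcal{B}(\varepsilon)}(x)=+\infty$, so the inequality would force $\bm{1}_{\mathcal{B}(\varepsilon)}(y)=+\infty$ for every $y$; but $\bm{1}_{\mathcal{B}(\varepsilon)}(0)=0$, a contradiction, hence $\partial\bm{1}_{\mathcal{B}(\varepsilon)}(x)=\emptyset$, which handles the second branch.

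For $\Vert x\Vert\le\varepsilon$ we have $\bm{1}_{\mathcal{B}(\varepsilon)}(x)=0$, and the subgradient inequality is automatic whenever $y\notin\mathcal{B}(\varepsilon)$; thus $p\in\partial\bm{1}_{\mathcal{B}(\varepsilon)}(x)$ iff $\langle p,y-x\rangle\le 0$ for all $y\in\mathcal{B}(\varepsilon)$, i.e. iff
\[
\varepsilon\Vert p\Vert_*=\sup_{\Vert y\Vert\le\varepsilon}\langle p,y\rangle\ \le\ \langle p,x\rangle ,
\]
where the first equality is just the definition of the dual norm. When $\Vert x\Vert<\varepsilon$, combining this with the generalized Cauchy--Schwarz inequality $\langle p,x\rangle\le\Vert p\Vert_*\Vert x\Vert$ gives $\Vert p\Vert_*(\varepsilon-\Vert x\Vert)\le 0$, forcing $\Vert p\Vert_*=0$, i.e. $p=0$; since $p=0$ trivially satisfies the condition, $\partial\bm{1}_{\mathcal{B}(\varepsilon)}(x)=\{0\}$. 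When $\Vert x\Vert=\varepsilon$, the constraint $\varepsilon\Vert p\Vert_*\le\langle p,x\rangle$ together with the always-valid $\langle p,x\rangle\le\Vert p\Vert_*\Vert x\Vert=\varepsilon\Vert p\Vert_*$ pins down $\langle p,x\rangle=\varepsilon\Vert p\Vert_*$; conversely, any $p$ with $\langle p,x\rangle=\varepsilon\Vert p\Vert_*$ satisfies $\langle p,y\rangle\le\Vert p\Vert_*\Vert y\Vert\le\varepsilon\Vert p\Vert_*=\langle p,x\rangle$ for every $y\in\mathcal{B}(\varepsilon)$, hence lies in the subdifferential. This yields $\partial\bm{1}_{\mathcal{B}(\varepsilon)}(x)=\{p:\langle p,x\rangle=\varepsilon\Vert p\Vert_*\}$, the third branch.

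There is no real obstacle here: the proof is a short case split. The only points deserving care are the standard fact that a convex function has empty subdifferential wherever it equals $+\infty$, and the reduction of the subgradient inequality to the support-function inequality $\varepsilon\Vert p\Vert_*\le\langle p,x\rangle$ via $\sup_{\Vert y\Vert\le\varepsilon}\langle p,y\rangle=\varepsilon\Vert p\Vert_*$. As an alternative one could obtain the $\Vert x\Vert=\varepsilon$ case from the identity $\bm{1}_{\mathcal{B}(\varepsilon)}=(\varepsilon\Vert\cdot\Vert_*)^*$ (Lemma~\ref{lemma-fenchel-norm}) together with the conjugate--subgradient correspondence, but the direct argument above is entirely self-contained.
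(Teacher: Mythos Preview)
Your proof is correct and follows essentially the same case split as the paper's own argument. The only cosmetic difference is in the interior case $\Vert x\Vert<\varepsilon$: the paper perturbs with $y=x+\gamma p/\Vert p\Vert_2$ to force $p=0$, whereas you reach the same conclusion more cleanly via $\varepsilon\Vert p\Vert_*\le\langle p,x\rangle\le\Vert p\Vert_*\Vert x\Vert$; your treatment of the boundary case is also slightly more complete since you verify both inclusions explicitly.
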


\begin{proof}
Recall that the definition of the subdifferential is:
\begin{align*}
    \partial\bm{1}_{\mathcal{B}(\varepsilon)}(x):=\{p:~\bm{1}_{\mathcal{B}(\varepsilon)}(y)\geq \bm{1}_{\mathcal{B}(\varepsilon)}(x) + \langle p, y-x\rangle~~\forall y\}
\end{align*}
If $\Vert x\Vert < \varepsilon$, then we have that $p$ must satisfy for all $y\in\mathcal{B}(\varepsilon)$:
\begin{align*}
    \langle p, y-x\rangle\leq 0
\end{align*}
By taking $\gamma$ sufficiently small we can therefore choose $y=x+\gamma \frac{p}{\Vert p \Vert_2}\in\mathcal{B}(\varepsilon)$ which leads to 
\begin{align*}
    \gamma \Vert p\Vert_2\leq 0  
\end{align*}
which is only true for $p=0$ as $\gamma$ can be selected to be negative or positive. Now if $\Vert x\Vert > \varepsilon$, then the subdifferential is clearly empty. Finally, let us consider the case where $\Vert x\Vert = \varepsilon$. We deduce that:
\begin{align*}
    \langle p, x\rangle \geq \langle p, y\rangle -\bm{1}_{\mathcal{B}(\varepsilon)}(y)
\end{align*}
and so for all $y$. Therefore we obtain that
\begin{align*}
    \langle p, x\rangle  &\geq \sup_y \langle p, y\rangle -\bm{1}_{\mathcal{B}(\varepsilon)}(y)\\
    &=\varepsilon\Vert p\Vert_*
\end{align*}
But we also have that:
\begin{align*}
    \varepsilon\Vert p\Vert_*=\Vert p\Vert_* \Vert x\Vert \geq \langle p, x\rangle
\end{align*}
from which follows that $ \langle p, x\rangle=\varepsilon\Vert p\Vert_*$ which conclude the proof.
\end{proof}

\end{document}